\newtheorem{lemma}{Lemma}
\newenvironment{proof}[1][Proof]{\par\vspace{1ex}\noindent\textbf{#1. }\ignorespaces}{\hfill$\square$}
\title{Bi‐Level Contextual Bandits for Individualized Resource Allocation under Delayed Feedback$^{\dagger}$}
\author{
    %Authors
    % All authors must be in the same font size and format.
    Mohammadsina Almasi, Hadis Anahideh\thanks{Corresponding Author.}\\
}
\title{My Publication Title --- Single Author}
\author {
    Author Name
}
\title{My Publication Title --- Multiple Authors}
\author {
    % Authors
    First Author Name\textsuperscript{\rm 1,\rm 2},
    Second Author Name\textsuperscript{\rm 2},
    Third Author Name\textsuperscript{\rm 1}
}
\begin{document}
% \setlength{\headheight}{16pt}   % >= 14pt avoids fancyhdr warnings
% \setlength{\headsep}{14pt}
% \fancyhf{} % clear
% \chead{\small The 40th AAAI Conference on Artificial Intelligence (AAAI–26)}
% \renewcommand{\headrulewidth}{0.4pt}
% \pagestyle{fancy}         % all pages
% % If you also want it on the title page:
% \thispagestyle{fancy}

\maketitle

\begingroup
\renewcommand\thefootnote{\fnsymbol{footnote}}
\footnotetext[2]{Preprint. Accepted at AAAI-26 (AISI Track). The final, typeset
version will appear in the Proceedings of the AAAI Conference on Artificial Intelligence
(AAAI-26), 2026.}
\endgroup

\begin{abstract}
Equitably allocating limited resources in high-stakes domains—such as education, employment, and healthcare—requires balancing short-term utility with long-term impact, while accounting for delayed outcomes, hidden heterogeneity, and ethical constraints. However, most learning-based allocation frameworks either assume immediate feedback or ignore the complex interplay between individual characteristics and intervention dynamics. We propose a novel bi-level contextual bandit framework for individualized resource allocation under delayed feedback, designed to operate in real-world settings with dynamic populations, capacity constraints, and time-sensitive impact. At the meta level, the model optimizes subgroup-level budget allocations to satisfy fairness and operational constraints. At the base level, it identifies the most responsive individuals within each group using a neural network trained on observational data, while respecting cooldown windows and delayed treatment effects modeled via resource-specific delay kernels. By explicitly modeling temporal dynamics and feedback delays, the algorithm continually refines its policy as new data arrive, enabling more responsive and adaptive decision-making. We validate our approach on two real-world datasets from education and workforce development, showing that it achieves higher cumulative outcomes, better adapts to delay structures, and ensures equitable distribution across subgroups. Our results highlight the potential of delay-aware, data-driven decision-making systems to improve institutional policy and social welfare.
\end{abstract}

% Uncomment the following to link to your code, datasets, an extended version or similar.
% You must keep this block between (not within) the abstract and the main body of the paper.
\begin{links}
    \link{Code}{https://github.com/sinatorrr/MAB}
    % \link{Datasets}{https://github.com/sinatorrr/MAB}
    % \link{Extended version}{https://aaai.org/example/extended-version}
\end{links}

\section{Introduction} \label{sec:introduction}

Resource allocation is a central challenge in high-stake domains such as healthcare, where practitioners determine treatment priorities~\cite{lane2017equity,aktacs2007decision,daniels2016resource}; telecommunications, where capacity must be distributed across competing channels~\cite{su2019resource,hui2002resource,gibney1998dynamic}; education, where instructional or financial resources are allocated to students~\cite{monk1981toward,liefner2003funding,massy1996resource}; and social welfare, where job training and support programs are delivered~\cite{nguyen2014computational,roos2010complexity}. In these settings, decision-makers often observe contextual information at the individual level and must sequentially allocate scarce interventions to optimize long-term, population-wide outcomes~\cite{hegazy1999optimization,gong2012efficient}.

Classical resource allocation models provide useful abstractions but rely on idealized assumptions, often overlooking temporal, ethical, and institutional constraints that affect fairness, feasibility, and policy relevance in real-world settings~\cite{wang2022surrogate,zou2019reinforcement,obermeyer2019dissecting,chouldechova2017fair}. To address these limitations, recent research has leveraged algorithmic frameworks such as multi-armed bandits (MABs), particularly their contextual variants, to make personalized, adaptive decisions based on observed features~\cite{grover2018best, gyorgy2021adapting,joulani2013online}. However, several critical challenges remain unaddressed in the literature.

First, most existing MAB approaches assume that outcomes are observed immediately following an allocation. In reality, the effects of interventions unfold gradually: 
medical treatments manifest their efficacy over days or weeks~\cite{hanna2020mortality,yanovski2014long}, educational interventions accrue impact over semesters~\cite{barnett1995long,almalki2022effect}, and workforce programs influence long-term employment trajectories~\cite{edmondson2019co}. These delayed and temporally structured effects introduce feedback dynamics that are rarely modeled in full. While recent methods incorporate delay via fixed or stochastic lags~\cite{lancewicki2021stochastic,shi2023statistical}, most treat delay as a nuisance rather than learning the temporal impact profile of each intervention.
To overcome these limitations, studies have introduced delay-aware allocation methods, including post hoc reward adjustments to capture deferred effects~\cite{tang2021bandit}. Another line of work uses an episodic framework, modeling feedback delays as discrete random variables representing decision rounds between action and outcome. However, these models often assume fixed or context-independent lags, limiting their ability to capture heterogeneous, intervention-specific temporal dynamics~\cite{kuang2023posterior,yin2023long}.

Second, traditional models assume a static population and ignore real-world deployment constraints. In practice, participants join and leave in cohort cycles (e.g., semesters or enrollment periods), creating time-varying populations that challenge fixed decision-pool assumptions. They also overlook ethical and institutional rules like cooldown periods restricting repeated allocations of the same resource to the same individual—constraints essential for fairness and feasibility~\cite{li2022efficient,patil2021achieving}. While some studies have begun to incorporate these elements, they typically do so under highly stylized conditions—assuming zero delay, a single homogeneous resource type, or i.i.d. reward structures~\cite{wang2019distributed,zuo2021combinatorial,burnetas2025optimal}. Such simplifications fail to capture the structural dependencies and heterogeneity that characterize real-world allocation environments.
Most existing algorithms target either individual personalization or group fairness, rarely both. Some optimize personalized rewards but ignore group equity; others enforce fairness while overlooking individual heterogeneity~\cite{li2022efficient,patil2021achieving}. Bridging both is essential for equitable, effective policy in settings requiring individual adaptation and group awareness.

In light of these challenges, we propose \textbf{Meta-level Contextual Upper Confidence Bandit (MetaCUB)} a novel {bi-level contextual bandit framework} for {individualized resource allocation} under delayed feedback and real-world constraints. At its core is a neural network that maps individual-level features to subgroup-level treatment effects, capturing latent heterogeneity in responsiveness. The meta level allocates sub-budgets across groups to ensure equity, while the base level selects the most responsive individuals within each group under resource-specific constraints such as cooldowns and budgets. Our framework models cohort-driven population dynamics, multiple resource types with distinct budgets, and heterogeneous delay-feedback profiles, where each resource has a delay kernel describing its temporal impact. Evaluations on real-world datasets show that \textbf{MetaCUB} learns effective, constraint-aware allocation policies that outperform strong baselines.

In summary, our contributions are fourfold. First, we propose a neural network–based learning framework that maps individual contexts to subgroup effects, capturing latent heterogeneity and improving regret over linear models. Second, we design a bi-level contextual bandit that jointly optimizes group- and individual-level allocations under fairness, cooldown, and capacity constraints. Third, we build a deployment-ready architecture modeling cohort dynamics, resource-specific budgets, delay kernels, and stochastic cooldowns. Finally, we validate our framework through extensive experiments on real-world datasets, showing superior cumulative reward, fairness, and delay adaptation over state-of-the-art baselines.

\section{Related Works} \label{sec:related_works}

The multi-armed bandit (MAB) framework is a foundational model for sequential decision-making under uncertainty, widely used for efficient resource allocation in dynamic settings~\cite{kuleshov2014algorithms,agrawal2012analysis}. It has shown effectiveness in domains like telecommunications, finance, online platforms, and healthcare, where adaptive learning is critical~\cite{huo2017risk,biswas2021learn,bouneffouf2020survey}.

Contextual bandits extend classical MABs by leveraging individual-level covariates to enable personalized decision policies~\cite{lu2010contextual,zhou2015survey}. Methods like LinUCB and contextual Thompson Sampling provide theoretical regret guarantees when rewards depend linearly or probabilistically on covariates~\cite{agrawal2013thompson,kaufmann2012bayesian,chouldechova2017fair}. These models have advanced personalization in areas such as online recommendations, clinical decisions, and educational interventions. Recent extensions to multi-agent and group-based learning enable collaborative and distributed allocation across multiple learners or subpopulations~\cite{cui2019multi,xu2020collaborative}.

Despite this progress, most contextual bandit approaches remain difficult to deploy effectively in socially impactful domains. First, they often assume immediate feedback, overlooking that real-world interventions—such as tutoring, job training, or healthcare treatments—produce delayed effects over time. Although recent studies address stochastic or bounded delays~\cite{joulani2013online,gael2020stochastic}, they often treat delay as a nuisance variable, neglecting its temporal dynamics. Several works have proposed tracking reward queues~\cite{tang2021bandit,vernade2017stochastic}, bounding adversarial regret~\cite{erez2024regret,steiger2022learning}, coupling delay with payoff magnitude~\cite{schlisselberg2025delay} but few frameworks model how reward signals are distributed across time in a resource-specific and learnable manner.
Second, while constrained MABs, such as bandits with knapsacks~\cite{badanidiyuru2018bandits,tran2012knapsack} or fairness-aware variants~\cite{chen2020fair,claure2020multi}, address limitations on budgets or equity, they often assume static populations and homogeneous reward structures. In practice, many allocation settings involve dynamic cohorts, where individuals enter and exit over time (e.g., educational semesters or batched workforce programs). Some studies have examined the concept of a dynamic population, which captures the changing availability of arms, through frameworks such as contextual combinatorial bandits with volatile arms and submodular rewards, or interest-drift models with immediate feedback; however, these approaches overlook the partial observability of feedback inherent in educational and workforce outcomes~\cite{chen2018contextual,xu2020contextual}. Additionally, real-world deployments must satisfy cooldown constraints, such as ethical limits on repeated treatment~\cite{liu2012learning,chen2022uncertainty,mate2022field}, yet such pacing mechanisms are seldom integrated into existing MAB formulations; for example, classical blocking methods address availability pacing but ignore delayed impact~\cite{basu2021contextual}.

Third, prior works often separate fairness from personalization. Some optimize individual outcomes without ensuring group equity, while others enforce group fairness with no within-group heterogeneity. Few models integrate both, enabling individualized treatment within group-level budget constraints under delayed feedback and dynamic population.

\section{Problem Setup}\label{sec:methodology}

We consider a sequential decision-making problem where a central planner (e.g., policymaker or service provider) allocates limited resources over time to individuals grouped by demographic or socioeconomic traits. The goal is to optimize long-term outcomes—like academic or employment success—via adaptive, context-aware decisions accounting for delayed effects and real-world constraints. This setting appears in high-stakes domains such as education (e.g., financial aid, tutoring) and workforce programs (e.g., training support). Our goal is to design an allocation framework for evolving, constrained, and fairness-sensitive environments.

Consider a set of $N$ individuals of $K$ demographic subgroups of size $n^k$ each and $N=\sum^{K}_{k=1} n^k$. Each individual is associated with a context vector $\mathbf{x}^i\in \mathcal{X}\subseteq\mathbb{R}^{M}$, capturing $M$ demographic and domain-specific attributes, including their recent resource assignments. The decision-maker manages $R$ distinct resource types, each with an integer-valued budget $b^r \in \mathbb{Z}_+$ for $r \in R$. The allocation process unfolds over $T$ discrete time steps. Let $\mathcal{I} \subseteq N$ denote the subset of individuals who receive at least one allocation during the decision horizon. At each decision round $t\in T$, one individual $i_t \in \mathcal{I}$ is selected and assigned a unit of resource $r_t\in R$, subject to the budget constraint:
\begin{equation}\label{eq:budget_const}
    \sum_{t=1}^{T}\mathbb{I}\{r_t=r\}\;\le\;b^{r} \quad\forall\,r\in R
\end{equation}
At each round $t$, the action taken is the pair $a_t = (i_t, r_t)$, where $i_t \in \mathcal{I}$ is the selected individual and $r_t \in R$ is the assigned resource. The full allocation sequence over the horizon is denoted by $\{a_t\}_{t=1}^{T}$. Allocating a resource to an individual yields an instantaneous reward $y(t) = f\bigl(\mathbf{x}^i(t)\bigr)$, where $f : \mathcal{X} \to \mathbb{R}$ is a reward function mapping the individual's context, possibly including the allocated resource, to an expected outcome\footnote{The reward function can be adapted to the application domain; for binary outcomes, for example, one may use $f : \mathcal{X} \to [0, 1]$.}. The decision-maker's objective is to select a sequence of actions $\{a_t = (i_t, r_t)\}_{t=1}^{T}$ that maximizes the expected cumulative reward over the time horizon:
\begin{equation}\label{eq:base_obj}
\max_{\{a_t\}_{t=1}^T}\
\mathbb{E}\biggl[\sum_{t=1}^T y(t)\biggr]
\end{equation}
Equivalently, the goal can be framed as minimizing the cumulative regret relative to the best feasible allocation policy in hindsight, i.e., the optimal policy that would have been chosen with full knowledge of individual responses.

A natural and flexible framework for modeling sequential resource allocation under uncertainty is the contextual multi-armed bandit (MAB)~\cite{lu2010contextual}
In this setting, each feasible action—defined as an individual–resource pair $a_t=(i_t,r_t)$ is treated as an arm. The associated context $\mathbf{x}^{i}(t)$ provides side information about the individual and their history, while a pretrained reward function $f(\cdot)$ serves as the feedback model, predicting expected outcomes for each action. Resource budgets impose constraints on the number of allowable arm pulls, and the objective becomes minimizing regret relative to the best allocation policy in hindsight
A common way to balance this trade-off is via Upper Confidence Bound (UCB) methods, which choose actions maximizing predicted reward plus an uncertainty bonus, balancing exploration and exploitation. A baseline constrained contextual MAB procedure is outlined in Algorithm 3 in Appendix. We extend it with additional components to better capture real-world complexities.

\textbf{Population Change.} In many deployment settings, participants enroll and exit in fixed-duration cycles, forming successive distinct cohorts. To capture this, we partition the decision horizon of $T$ rounds into $H = \lceil T / L\rceil$ contiguous blocks of length $L$. Let $\mathcal{I}_{h}\subseteq N$ denote the set of individuals in cohort $h$, who are eligible to receive allocations only during rounds $t \in [(h-1)L + 1,\;hL], \; h=1,\dots,H$. At the beginning of each block $h$, the decision-maker observes the context vectors $\{\mathbf{x}^{i}(t) : i \in \mathcal{I}_{h}\}$ and allocates resources exclusively among individuals in cohort $\mathcal{I}_{h}$ for the next $L$ rounds. At the end of this period, cohort $\mathcal{I}_{h}$ exits the program and is replaced by the incoming cohort $\mathcal{I}_{h+1}$. 
This structure introduces non-stationarity into the decision process, as the available pool of individuals varies across time. The algorithm must therefore learn not only whom to allocate resources to, but also adapt its policy to the evolving population across cohorts.

\textbf{Delayed Feedback.} Resource allocations in real‑world scenarios often exhibit delayed effects. We model resource‐specific feedback delays over a horizon of $T$ rounds. For each resource $r \in R$, we define a delay kernel $K^r$, a nonnegative function over the time horizon that distributes the realized reward across future rounds. Formally, let $K^{r}:\{0,\dots,T-1\}\rightarrow[0,1],\quad 
\sum_{\tau=0}^{T-1} K^{r}(\tau)=1,$ where $K^r(\tau)$ denotes the proportion of the reward from allocating resource $r$ that is observed $\tau$ rounds after the allocation. By definition, $K^r(\tau) = 0$ for $\tau < 0$ or $\tau > T{-}1$, ensuring bounded support. 
To construct these kernels, we discretize a Beta distribution over $[0,1]$ into $T$ equal-width bins. In particular, 
\begin{equation}\label{eq:delay}
     K^{r}(\tau)
=\int_{\frac{\tau}{T}}^{\frac{\tau+1}{T}}
\mathrm{Beta}(z;\alpha^{r},\beta^{r})\,dz
\end{equation} 
for $\tau=0,1,\dots,T-1$, where $\mathrm{Beta}(z;\alpha,\beta)=\frac{z^{\alpha-1}(1-z)^{\beta-1}}{B(\alpha,\beta)}$ for $z\in(0,1)$, and $B(\alpha,\beta)=\int_{0}^{1}z^{\alpha-1}(1-z)^{\beta-1}\,dz$. This formulation flexibly models feedback latency: $\alpha^r < 1$ gives immediate feedback, $\beta^r < 1$ produces long-tail delays, and $\alpha^r, \beta^r > 1$ yield unimodal kernels. Mixtures of Beta densities can represent more complex or multimodal delays. Discretized Beta kernels assign unit mass to $\{0,\cdots,T-1\}$ and flexibly capture early, late, or long-tailed delays. They attribute outcomes to service rounds without leakage beyond the operational window, aligning with program accounting. For each resource, $(\alpha,\beta)$ parameters are chosen from plausible timing profiles and fixed during learning. The framework remains distribution-agnostic, any normalized delay kernel is admissible, and supports adaptive or meta-learned kernel estimation when greater flexibility is needed~\cite{kassraie2022meta}. Overall, this kernel-based framework extends beyond fixed delays or exponential decay, capturing heterogeneous, resource-dependent feedback dynamics that mirror real-world interventions.

\textbf{Allocation Cooldown.} In real-world interventions, individuals are rarely allowed, or advised, to receive the same resource repeatedly in short intervals~\cite{weiner2012search,legare2018interventions}. Treatment effects take time to manifest, capacity is limited, and regulations often restrict repeated support. To model this, we introduce \emph{cooldown} constraints that prevent reallocation of the same resource to an individual for several rounds after use.

Let $c^r \in \mathbb{Z}_{+}$ with $c^r < T$ denote the cooldown length, i.e., the number of rounds during which an individual is ineligible to receive the same resource $r \in R$ again. When $c^r = T$, the cooldown spans the full horizon, limiting each individual to at most one allocation of $r$ over the $T$ rounds. After individual $i\in \mathcal{I}_h$ receives resource $r\in R$ at round $t\in T$, they become temporarily ineligible to receive the same resource again for the next $c^r$ consecutive rounds. Formally, let $z_{i,r}(t)\in\{0,1\}$ whether resource $r$ is allocated to individual $i$ at round $t$. That is we impose the following constraint:
\begin{equation}\label{eq:cooldown}
\sum_{s=t}^{t+c^r}z_{i,r}(s)\le1 \;
 \forall i\in\mathcal{I}_{h} ,\, \forall r\in R,\,t =1,\dots,T-c^r
\end{equation}
This condition limits each individual to one unit of resource $r$ within any $c^r + 1$ consecutive rounds. While the delay kernel $K^r$ models reward evolution, the cooldown enforces allocation spacing, jointly forming a temporally aware framework that balances impact and pacing.

\section{Proposed Approach}\label{sec:proposed approach}
We extend the base model in Equation~\eqref{eq:base_obj} to include \emph{population change}, \emph{delayed feedback}, and \emph{allocation cooldowns}. Keeping binary decisions $z_{i,r}(t)\in{0,1}$ and budget limits (Equation~\eqref{eq:budget_const}), we add time-varying eligibility for cohort dynamics, cooldowns (Equation~\eqref{eq:cooldown}) to control repeated allocations, and cumulative rewards reflecting temporally distributed effects through resource-specific delay kernels.
At each decision round $t$, the observed reward $y(t)$ aggregates the delayed impacts of all past allocations whose effects materialize at time $t$. Formally, the reward is computed as:
\begin{equation}
    y(t)=\sum_{u=1}^{t}\sum_{i\in \mathcal{I}_h(u)}\sum_{r\in R}
    K^{r}(t-u)\, f\bigl(\mathbf{x}^{\,i}(u)\bigr)\, z_{i,r}(u).
    \label{eq:yt}
\end{equation}
where $K^r(\cdot)$ is the delay kernel associated with resource $r$, and $f(\cdot)$ is a learned model that maps context vectors to predicted outcomes.
The full problem is then formulated as the following constrained optimization program. We use the following shorthand: "1/round" for one resource per individual per round, "B" for total resource budget constraints, and "CD" for cooldown restrictions on repeated allocations.
\setlength{\fboxsep}{3pt}
\setlength{\jot}{0.5pt}
{\small
\noindent\fbox{%
  \parbox{\dimexpr\linewidth-2\fboxsep-2\fboxrule}{%
    \begin{subequations}\label{eq:extended_problem}
    \begin{align}
      &\max_{z_{i,r}(t)}\quad
         \mathbb{E}\left[\sum_{t=1}^{T} y(t)\right]
          \label{obj:extended}\\
      &\text{s.t.}\nonumber\\
      &\sum_{r\in R} z_{i,r}(t)\le 1 \quad  \forall i,t \quad \text{(1/round)}
          \label{cons:oneperround}\\
      & \sum_{t=1}^{T}\sum_{i\in \mathcal{I}_h (t)} z_{i,r}(t)\le b^{r},   \forall r \quad \text{(B)} 
          \label{cons:budget}\\
      & \sum_{s=t}^{t+c^r} z_{i,r}(s)\le 1 \quad \forall i,\, r,\, t=1,\dots,T-c^r \quad \text{(CD)}
          \label{cons:cooldown}\\
      &z_{i,r}(t)\in\{0,1\} \quad 
      \forall i, r, t \nonumber
    \end{align}
    \end{subequations}
  }
}
}

To solve the extended problem formulation in~\eqref{eq:extended_problem}a-d, we propose \textbf{MetaCUB} (Meta level Contextual Upper Confidence Bandit), a bi-level contextual bandit optimization framework. At the upper level, a meta-bandit allocates fractional resource budgets across demographic groups to maximize population-wide impact under equity constraints. At the lower level, an individual bandit selects individuals in each group using contextual features and a learned mapping from profiles to expected outcomes (e.g., mean GPA).

\textbf{Meta-level Framework.}
We assume a fixed computational budget of $T_m$ iterations at the meta-level. At each iteration $t_m \in \{1, \dots, T_m\}$, the meta-level algorithm selects a candidate meta-allocation policy 
$\bar{\boldsymbol z}(t_m) = \left\{ \bar{z}^k_r(t_m) \right\}_{k \in K, r \in R} \in \widetilde{\Delta}^{|K| \cdot |R|}$, where $\Delta^{|K| \cdot |R|}$ denotes the $(|K| \cdot |R|)$-dimensional probability simplex (i.e., )the non-negative vectors in $\mathbb{R}^{|K| \cdot |R|}$ that sum to 1.) ensuring that the total resource allocation across all subgroup–resource pairs remains normalized.
Each entry $\bar{\boldsymbol z}^{k}_{r}(t_m)$ specifies the proportion of the total resource budget allocated to subgroup $k \in K$ for resource type $r \in R$ at iteration $t_m$. The policy must satisfy the following simplex constraint:
\begin{equation}
\sum_{k \in K} \sum_{r \in R} \bar{\boldsymbol z}^{k}_{r}(t_m) \leq 1 \quad \forall t_m\in \{1,...,T_m\} 
\end{equation}
To initiate the optimization, we generate an initial set of $n_0$ candidate meta-allocation policies $\{\bar{\boldsymbol z}^{(j)}\}_{j=1}^{n_0}$ sampled from the interior of the simplex. For each candidate policy $\bar{\boldsymbol z}^{(j)}$, we simulate the subgroup-level outcomes by randomly assigning individuals within each group to resource types according to the respective sub-budgets $\bar{\boldsymbol z}^{k(j)}_{r}$, using the shared learned outcome model $f$ to compute predicted individual outcomes. 
We then compute the mean predicted outcome $\mu^{k}_{r}(\bar{\boldsymbol z})$ for each group-resource pair and aggregate them into a global utility score:
\begin{equation}
\hat{y}(\bar{\boldsymbol z}) = \sum_{k \in K} \sum_{r \in R} \bar{\boldsymbol z}^{k}_{r} \cdot \mu^{k}_{r}(\bar{\boldsymbol z}).
\end{equation}
Instead of fitting and retraining a separate Gaussian Process (GP) over the high-dimensional meta-policy space, we reuse the learned outcome model $f$-which maps individual context to expected reward-as a simulation-based surrogate. This captures the functional relationship between allocation decisions and observed outcomes, enabling fast and scalable evaluation of candidate meta-policies $\bar{\boldsymbol z}$.

For any candidate $\bar{\boldsymbol z}$, we simulate assignments by probabilistically distributing resources within each subgroup in proportion to $\bar{z}^k_r$ and then use $f$ to predict individual outcomes. The resulting subgroup-level predictions are aggregated to estimate the overall utility $\hat{y}(\bar{\boldsymbol z})$.
This surrogate approach avoids the computational burden of GP inference in high dimensions while leveraging the contextual expressivity of $f$, which captures latent structure across individuals and resources. At each round $t_m$, to select the next candidate meta-policy, we adopt an UCB acquisition rule adapted to this simulation setting. Specifically, we estimate the posterior mean $\mu(\bar{\boldsymbol z})$ and empirical standard deviation $\sigma(\bar{\boldsymbol z})$ over multiple stochastic rollout simulations. The next meta-policy is then selected as the one maximizing the acquisition score:
\begin{equation}
    \bar{\boldsymbol z}(t_m)= \mathop{\mathrm{arg\,max}}\limits_{\bar{\boldsymbol z}\in \widetilde{\Delta}^{|K|\cdot|R|}}
    \bigl(\mu(\bar{\boldsymbol z}) + \beta_{t_m}\,\sigma(\bar{\boldsymbol z})\bigr)
\end{equation}
where $\beta_{t_m}$ is a time-dependent exploration parameter that balances exploitation of high-utility policies with exploration of uncertain regions of the meta-policy space. This acquisition strategy allows us to exploit the expressive power of $f$ while efficiently navigating the meta-policy space, eliminating the need to fit an explicit Gaussian Process. 
This approach scales effectively with dimensionality and adapts to contextual heterogeneity encoded in the population. As summarized in Algorithm~\eqref{alg:meta}, this meta-level optimization yields an optimal subgroup-level resource allocation policy $\bar{\boldsymbol z}^*$, which is then passed to the base-level assignment phase for individual-level decision making.

\textbf{Base-level Framework.}
Given the subgroup-level meta-allocation policy \(\bar{\boldsymbol{z}}^* = \{\bar{z}_{r}^k\}_{k \in K, r \in R}\) produced by the meta-level optimization, the base-level framework performs an individual-level contextual bandit search within each \((k, r)\) cell to identify the most promising recipients. This step refines the coarse-grained allocation \(\bar{z}_{r}^k\) by using a local UCB rule over individual contexts to balance exploitation of high-expected responders with exploration under uncertainty.
Let \(\mathcal{I}_k\) denote the set of eligible individuals in subgroup \(k\), and let \(f\) be the shared predictive model mapping individual context \(\mathbf{x}^i\) to expected outcome \(\hat{y}_{i,r} = f(\mathbf{x}^i)\). For each \((k,r)\) pair with \(\bar{z}_{r}^k > 0\), we define the target number of allocations as $n_{k,r} = \left\lfloor \bar{z}_{r}^k \cdot |\mathcal{I}_k| \right\rfloor.$
To allocate resource $r$ to the top $n_{k,r}$ individuals in $\mathcal{I}_k$, we compute UCB scores $G_{i,r} = \hat{y}_{i,r} + \beta u_{i,r}$, where $u_{i,r}$ denotes uncertainty (e.g., prediction variance), and $\beta$ balances exploration and exploitation. The top $n_{k,r}$ individuals by $G_{i,r}$ receive resource $r$. This yields an individualized policy that respects meta-level group constraints while exploiting within-group variation (Algorithm~\ref{alg:base}).

\begin{algorithm}[!ht]
\footnotesize
\caption{MetaCUB: Phase 1: Meta-level}
\label{alg:meta}
\textbf{Input:} Subgroups $K$, Resources $R$, Computational Budget $T_m$, Initial Policies $\{\bar{\boldsymbol z}^{(j)}\}_{j=1}^{n_0}$, Outcome model $f$\\
\textbf{Output:} $\bar{\boldsymbol z}^* = \mathop{\mathrm{arg\,max}}\limits_{(\bar{\boldsymbol z}, \hat{y}) \in \mathcal{D}_{T_m}} \hat{y}$
\begin{algorithmic}[1]
\STATE \textbf{Initialize:} Dataset $\mathcal{D}_0 \gets \emptyset$
\FOR{$j = 1$ to $n_0$}
    \STATE Simulate individual-level assignments using $\bar{\boldsymbol z}^{(j)}$
    \STATE $\hat{y}^{(j)} \gets \text{Evaluate}(f, \bar{\boldsymbol z}^{(j)})$ \COMMENT{Estimate outcome via $f$}
    \STATE $\mathcal{D}_0 \gets \mathcal{D}_0 \cup \{(\bar{\boldsymbol z}^{(j)}, \hat{y}^{(j)})\}$
\ENDFOR
\FOR{$t_m = n_0 + 1$ to $T_m$}
    \STATE Sample candidate set $\mathcal{S}_{t_m} \subset \widetilde{\Delta}^{|K| \cdot |R|}$
    \FOR{each $\bar{\boldsymbol z} \in \mathcal{S}_{t_m}$}
        \STATE Perform $B$ simulations under $\bar{\boldsymbol{z}}$ using $f$
        \STATE Estimate $\mu(\bar{\boldsymbol{z}})$ and $\sigma(\bar{\boldsymbol{z}})$
        \STATE Compute UCB score: $a(\bar{\boldsymbol z}) = \mu(\bar{\boldsymbol z}) + \beta_{t_m} \cdot \sigma(\bar{\boldsymbol z})$
    \ENDFOR
    \STATE Select best candidate: $\bar{\boldsymbol z}(t_m) \gets \mathop{\mathrm{arg\,max}}\limits_{\bar{\boldsymbol z} \in \mathcal{S}_{t_m}} a(\bar{\boldsymbol z})$
    \STATE Simulate assignments under $\bar{\boldsymbol z}(t_m)$
    \STATE $\hat{y}(t_m) \gets \text{Evaluate}(f, \bar{\boldsymbol{z}}(t_m))$
    \STATE $\mathcal{D}_{t_m} \gets \mathcal{D}_{t_m-1} \cup \{(\bar{\boldsymbol z}(t_m), \hat{y}(t_m))\}$
\ENDFOR
\end{algorithmic}
\end{algorithm}

\begin{algorithm}[!ht]
\footnotesize
\caption{MetaCUB: Phase 2: Base-level}
\label{alg:base}
\textbf{Inputs:} Meta-policy \(\bar{\boldsymbol{z}}^* = \{\bar{z}_{r}^k\}_{k \in K, r \in R}\); Outcome model \(f\); Individual sets \(\{\mathcal{I}_k\}_{k \in K}\)\\
\parbox[t]{\dimexpr\linewidth-1em}{\textbf{Output:} Individual-level allocation $\mathcal{A}$}

\begin{algorithmic}[1]
\STATE Initialize allocation set \(\mathcal{A} \gets \emptyset\)
\FOR{each subgroup \(k \in K\)}
    \FOR{each resource \(r \in R\)}
        \IF{\(\bar{z}_{r}^k > 0\)}
            \STATE Set allocation count: \(n_{k,r} \gets \left\lfloor \bar{z}_{r}^k \cdot |\mathcal{I}_k| \right\rfloor\)
            \FOR{each individual \(i \in \mathcal{I}_k\)}
                \STATE Compute predicted reward: \(\hat{y}_{i,r} \gets f(\mathbf{x}^i)\)
                \STATE Compute UCB score: \(G_{i,r} \gets \hat{y}_{i,r} + \beta \cdot u_{i,r}\)
            \ENDFOR
            \STATE Select top \(n_{k,r}\) individuals by \(G_{i,r}\): \(\mathcal{S}_{k,r}\)
            \STATE \(\mathcal{A} \gets \mathcal{A} \cup \{(i, r) \mid i \in \mathcal{S}_{k,r} \}\)
        \ENDIF
    \ENDFOR
\ENDFOR
\STATE \textbf{return} \(\mathcal{A}\)
\end{algorithmic}
\end{algorithm}

\subsection{Fairness Properties of MetaCUB}

MetaCUB’s bi-level design promotes equitable outcomes by decoupling global resource allocation (meta-level) from individual-level targeting (base-level). This structure mitigates group-level allocation disparities often amplified in flat contextual.

\begin{lemma}[Disparity Reduction]\label{lemma1}
Let $\mathcal{A}_{\texttt{MetaCUB}}$ denote the bi-level allocation under MetaCUB, and $\mathcal{A}_{\texttt{Flat}}$ denote the allocation from a one-level contextual bandit (e.g., LinUCB) that selects individual-resource pairs without subgroup constraints. Define disparity as the difference between the maximum and minimum average outcome across subgroups,
$
\text{Disparity}(\mathcal{A}) = \max_{k \in K} \bar{y}_k(\mathcal{A}) - \min_{k \in K} \bar{y}_k(\mathcal{A}),$ where $\bar{y}_k(\mathcal{A})$ is the mean outcome for subgroup $k$ under allocation $\mathcal{A}$. Then, under mild assumptions on exploration and model accuracy,
$
\text{Disparity}(\mathcal{A}_{\texttt{MetaCUB}}) \le \text{Disparity}(\mathcal{A}_{\texttt{Flat}}) - \delta(T_m, f),
$
for some $\delta(T_m, f) > 0$ that increases with meta-rounds $T_m$ and predictor fidelity $f$.
\end{lemma}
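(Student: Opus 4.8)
The plan is to split $\text{Disparity}(\mathcal{A}_{\texttt{Flat}})$ into a \emph{structural} component that the flat policy's unconstrained greediness necessarily creates and a \emph{residual} component that the meta-level optimizer of MetaCUB cannot remove because of finite $T_m$ and imperfect $f$, and then argue that the residual is strictly smaller than the structural gap. First I would lower-bound the flat disparity. A one-level UCB ranks all pairs $(i,r)$ by $G_{i,r}=\hat y_{i,r}+\beta u_{i,r}$ with no per-subgroup cap, so under genuine between-subgroup heterogeneity---the population means $\bar\mu_k := \mathbb{E}[f(\mathbf{x}^i)\mid i\in\mathcal{I}_k]$ differ, with spread at least $\Delta_0>0$---and a capacity assumption that $\sum_{r}b^r < N$ (not everyone is served), the greedy set saturates the highest-mean subgroups and leaves some subgroup $k_-$ untreated. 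This forces $\bar{y}_{k_+}(\mathcal{A}_{\texttt{Flat}}) - \bar{y}_{k_-}(\mathcal{A}_{\texttt{Flat}}) \ge \Delta_0 + \tau_{\min}$, with $\tau_{\min}>0$ the minimal treatment boost; denote this lower bound $D_{\texttt{Flat}}$.

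Next I would upper-bound the MetaCUB disparity. The meta-level search runs over the simplex $\widetilde{\Delta}^{|K|\cdot|R|}$ with the equity constraint, and the base level treats $n_{k,r}=\lfloor \bar z^k_r\,|\mathcal{I}_k|\rfloor$ individuals per cell. Invoking a UCB-style convergence guarantee for the simulation-based acquisition rule---after $T_m$ rounds the selected $\bar{\boldsymbol z}^{*}$ obeys $\hat y(\bar{\boldsymbol z}^{*})\ge \max_{\bar{\boldsymbol z}}\hat y(\bar{\boldsymbol z}) - \epsilon(T_m)$ with $\epsilon(T_m)\downarrow 0$, e.g.\ $\epsilon(T_m)=O(\sqrt{\log T_m/T_m})$ from concentration of the rollout mean and standard-deviation estimates---together with a fidelity bound $\|f-f^{\star}\|_\infty\le\eta_f$, the realized subgroup means under $\mathcal{A}_{\texttt{MetaCUB}}$ lie within $c\,(\epsilon(T_m)+\eta_f)$ of those of the disparity-minimizing feasible allocation, whose disparity is a floor $D^{\star}$. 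Crucially $D^{\star} < D_{\texttt{Flat}}$, because the equity constraint forbids the concentrated flat solution: it guarantees every $(k,r)$ cell a positive share, lifting $\min_k \bar{y}_k$ above the untreated baseline while capping $\max_k \bar{y}_k$. Hence $\text{Disparity}(\mathcal{A}_{\texttt{MetaCUB}})\le D^{\star} + c\,(\epsilon(T_m)+\eta_f)$.

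Combining the two bounds gives $\text{Disparity}(\mathcal{A}_{\texttt{Flat}}) - \text{Disparity}(\mathcal{A}_{\texttt{MetaCUB}}) \ge (D_{\texttt{Flat}}-D^{\star}) - c\,(\epsilon(T_m)+\eta_f) =: \delta(T_m,f)$. Since $D_{\texttt{Flat}}-D^{\star}>0$ is a fixed structural constant while $c\,(\epsilon(T_m)+\eta_f)\to 0$ as $T_m$ grows and $f$ improves, $\delta(T_m,f)>0$ once $T_m$ is large enough or $\eta_f$ small enough, and it is monotonically increasing in $T_m$ (through $\epsilon(T_m)\downarrow$) and in predictor fidelity (through $\eta_f\downarrow$), which is exactly the claimed bound.

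\textbf{Main obstacle.} The hardest step is the second one: establishing a clean rate $\epsilon(T_m)$ for the nonstandard simulation-based meta-bandit---which reuses $f$ as a surrogate rather than maintaining a true GP posterior---and then converting a guarantee about the scalar utility $\hat y(\bar{\boldsymbol z})$ into a guarantee about the entire vector of subgroup means that defines disparity. This likely requires an auxiliary assumption that the equity-regularized meta-objective is strongly concave (or has a unique, well-separated maximizer) near its optimum, so that near-optimality in utility implies proximity in policy space and hence in the induced subgroup outcomes, together with an exploration assumption strong enough that every $(k,r)$ cell is sampled and $\eta_f$ is small relative to $\Delta_0$. Pinning down these "mild assumptions" precisely is where the real work lies.
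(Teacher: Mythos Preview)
Your proposal is a valid route to the lemma but differs substantially from the paper's argument. The paper proceeds by contradiction and rests the entire weight on three stated assumptions: (A1) a coverage guarantee that every subgroup is visited in at least $\gamma T_m$ meta-rounds, (A2) asymptotic convergence $\bar y_k(\mathcal{A}_{\texttt{MetaCUB}})\to \bar y_k^{*}$ as $T_m,f\to\infty$, and (A3) a flat-imbalance assumption that some subgroup $k^{\dagger}$ receives only $o(T_m)$ assignments under $\mathcal{A}_{\texttt{Flat}}$. From these it simply asserts $\bar y_k(\mathcal{A}_{\texttt{MetaCUB}})\ge \bar y_k(\mathcal{A}_{\texttt{Flat}})$ for all $k$ with strict inequality for at least one, deduces that $\min_k \bar y_k$ improves by $\delta$, and bounds the disparity directly. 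There is no rate analysis, no explicit $\epsilon(T_m)$, and no attempt to convert a utility guarantee into a subgroup-mean guarantee---that conversion is absorbed into assumption (A2). Your approach, by contrast, is constructive: you lower-bound $\text{Disparity}(\mathcal{A}_{\texttt{Flat}})$ via a structural heterogeneity-plus-capacity argument, upper-bound $\text{Disparity}(\mathcal{A}_{\texttt{MetaCUB}})$ via a quantitative meta-bandit convergence rate $\epsilon(T_m)$ and a fidelity term $\eta_f$, and read off $\delta$ as the difference. This buys you an explicit, interpretable form for $\delta(T_m,f)$ and a clearer picture of what ``mild assumptions'' actually need to be, whereas the paper's proof is shorter precisely because it pushes all of that into (A1)--(A3). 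The obstacle you flag---turning a guarantee on the scalar $\hat y(\bar{\boldsymbol z})$ into control of the full subgroup-outcome vector, given that the meta-level optimizes utility and not disparity---is real and is exactly the step the paper sidesteps by assumption rather than argument; your honest identification of it is more than the paper's own proof offers. One caution: your claim that the equity constraint ``guarantees every $(k,r)$ cell a positive share'' is not supported by the simplex constraint alone (a vertex of $\widetilde{\Delta}^{|K|\cdot|R|}$ concentrates all mass on one cell), so you would need to impose that lower-bound explicitly, paralleling the paper's (A1).
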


\begin{proof}[Sketch]
Flat contextual bandits optimize reward across individuals but can disproportionately favor dominant subgroups with higher estimated outcomes, leading to allocation imbalance. In contrast, MetaCUB first distributes resources across subgroups via meta-level optimization, ensuring broader coverage. Then, within each group, the base-level bandit targets high-benefit individuals. This structure bounds the inter-group disparity by ensuring minimum subgroup coverage and reducing waste through outcome-aware targeting. 
The fairness gap $\delta$ arises from this structure.\\
The full formal proof is provided in Appendix.
\end{proof}

\section{Experiments}\label{sec:experiments}
We evaluate the proposed bi-level delayed-feedback framework on two real-world datasets: the Educational Longitudinal Study (ELS)~\cite{els2002}, where resources represent financial aid packaging, and the JOBS randomized field experiment~\cite{jobs_nsw_psid}, where the resource is job training. Detailed dataset specifications appear in Appendix.
Our simulations cover varied experimental conditions, including delayed vs.\ immediate feedback, linear vs.\ nonlinear outcome mappings, different delay kernels, task types (regression for ELS, classification for JOBS), and resource dimensionality (multi-type in ELS, single-type in JOBS).

\begin{figure}[!ht]
    \centering
    \begin{subfigure}[t]{0.46\linewidth}
    \includegraphics[width=\linewidth]{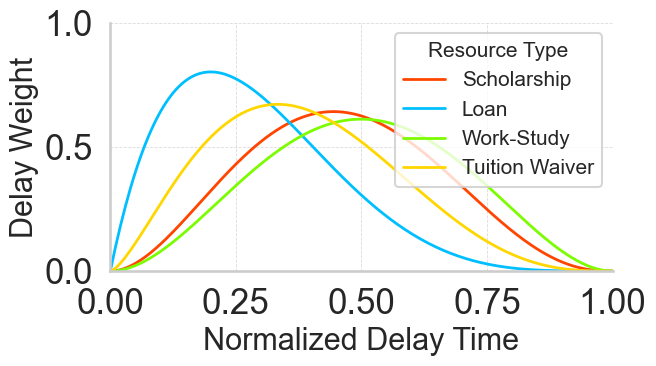}
        \caption{ELS Type-I kernels}
\label{fig:els_delay_kernels_type1}
    \end{subfigure}
    \hfill
    \begin{subfigure}[t]{0.46\linewidth}
        \includegraphics[width=\linewidth]{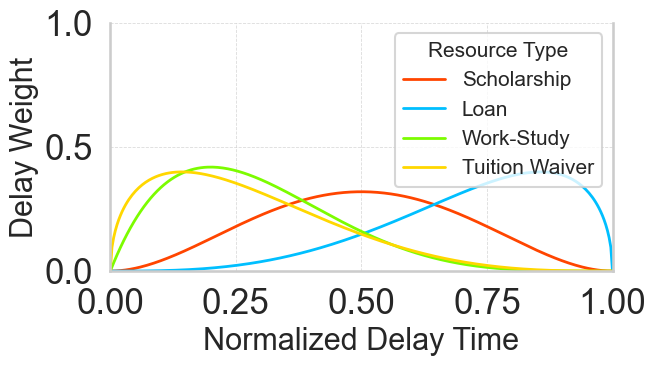}
\caption{ELS Type-II kernels}
        \label{fig:els_delay_kernels_type2}
    \end{subfigure}
    \begin{subfigure}[t]{0.46\linewidth}
        \includegraphics[width=\linewidth]{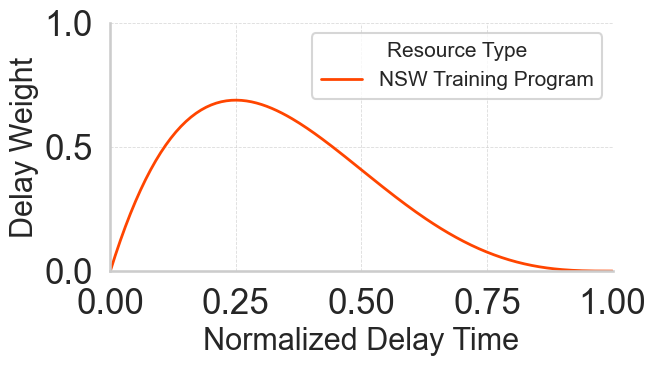}
\caption{JOBS Type-I kernel}
        \label{fig:jobs_delay_kernels_type1}
    \end{subfigure}
    \hfill
    \begin{subfigure}[t]{0.46\linewidth}
        \includegraphics[width=\linewidth]{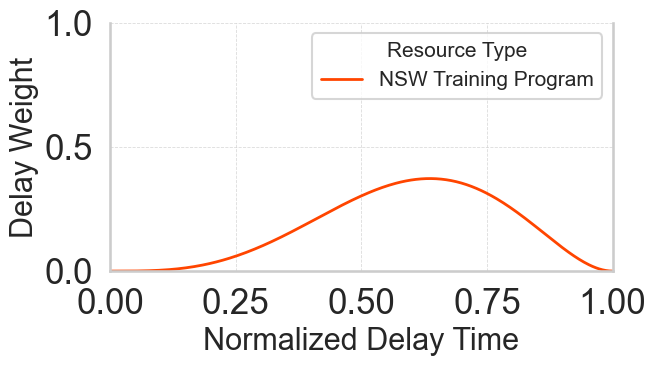}
\caption{JOBS Type-II kernel}
        \label{fig:jobs_delay_kernels_type2}
    \end{subfigure}
    \caption{Delay kernel distributions.}
\label{fig:delay_kernels_comparison}
\end{figure}

To systematically benchmark our method \emph{MetaCUB}, we compare its performance against a suite of baseline algorithms encompassing classical bandits, linear contextual approaches, combinatorial models, and adversarial formulations. \emph{UCB}~\cite{auer2002finite} treats each resource–recipient pair as an independent arm, ignoring context and group structure; \emph{LinUCB}~\cite{li2010contextual} incorporates individual contexts via linear regression but omits subgroup budgets; \emph{CUCB}~\cite{chen2016combinatorial} selects multiple arms per round yet suffers from limited feedback and scalability; \emph{EXP3}~\cite{auer2002nonstochastic} is robust to adversarial or delayed rewards but disregards stochastic structure and real‐world constraints; \emph{mEXP3}~\cite{tang2021bandit} explores over full allocation policies but incurs high variance in evaluation; and \emph{DUCB}~\cite{garivier2011upper} and \emph{SWUCB}~\cite{garivier2011upper} adapt to non‐stationarity via decay or sliding windows yet lack subgroup‐aware allocation mechanisms.  
A more detailed description of these baselines is provided in Table 2 in Appendix.

In both datasets, the number of base arms $K$ corresponds to racial subgroups. For ELS (GPA regression), we use ridge regression and a neural network for linear and nonlinear mappings, respectively. For JOBS (binary employment classification), we apply logistic regression and the same neural network to ensure consistent subgroup performance (e.g., 86\% overall accuracy on ELS: Asian=87\%, Black=85\%, Hispanic=86\%, White=86\%).
To simulate dynamic populations, individuals are grouped into fixed-length cohorts: 8 semesters for ELS and 12 months for JOBS. One cohort is active per round and replaced upon completion. Shaded bands in plots denote active cohort periods.
Feedback delay is modeled via two kernel types per dataset (Figure~\ref{fig:delay_kernels_comparison}): four resource-specific kernels in ELS and two variants for the single JOBS resource. These test our method’s robustness to heterogeneous, delayed rewards. To reflect real-world constraints, we impose stochastic cooldowns: after receiving a resource, individuals enter a cooldown sampled uniformly from ${1, 2, 3}$ rounds—adaptable to other settings.

All experiments are conducted with 20 independent random seeds to ensure robustness. Simulations are implemented in Python 3.11.5 using NumPy, scikit-learn, and BoTorch, and executed on an Apple M4 Pro (14-core CPU, 20-core GPU, 16-core Neural Engine, 24 GB RAM) running macOS 15.5. Reported performance metrics are averaged across all runs.

\section{Results}\label{sec:results}

The plots in Figure~\ref{fig:regret_album_ELS_I},~\ref{fig:regret_album_ELS_II},~\ref{fig:regret_album_JOBS_I}, and~\ref{fig:regret_album_JOBS_II} illustrate the cumulative regret trajectories of all baseline algorithms compared to our proposed method \emph{MetaCUB} across four experimental settings derived from both the ELS dataset and JOBS datasets with two distinct delay kernel configurations; lines show mean regret over 20 runs, and shaded regions indicate standard deviations. These settings vary along two dimensions: the nature of the outcome function (linear vs.\ nonlinear) and the presence or absence of delayed feedback (delayed vs. immediate). In all scenarios, \emph{MetaCUB} consistently achieves the lowest cumulative regret, demonstrating its superior ability to adaptively balance exploration and exploitation under both immediate and delayed reward settings. The performance gap is especially pronounced in the delayed-feedback environments, where conventional bandits like \emph{UCB} and \emph{EXP3} exhibit substantially higher regret due to their lack of temporal sensitivity. Algorithms such as \emph{DUCB} and \emph{SWUCB} show improved robustness under delay but still underperform relative to \emph{MetaCUB}, which leverages subgroup-level structure and kernelized delay modeling.

\begin{figure}[!ht]
    \centering
    \begin{subfigure}[t]{0.48\linewidth}
        \includegraphics[width=\linewidth]{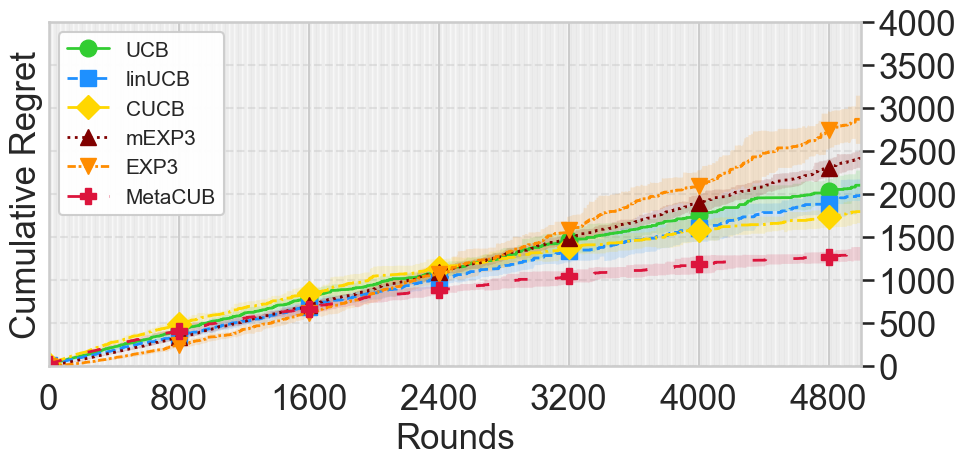}
        \caption{Immediate * linear}
        \label{fig:els_I_sub1}
    \end{subfigure}
    \hfill
    \begin{subfigure}[t]{0.48\linewidth}
        \includegraphics[width=\linewidth]{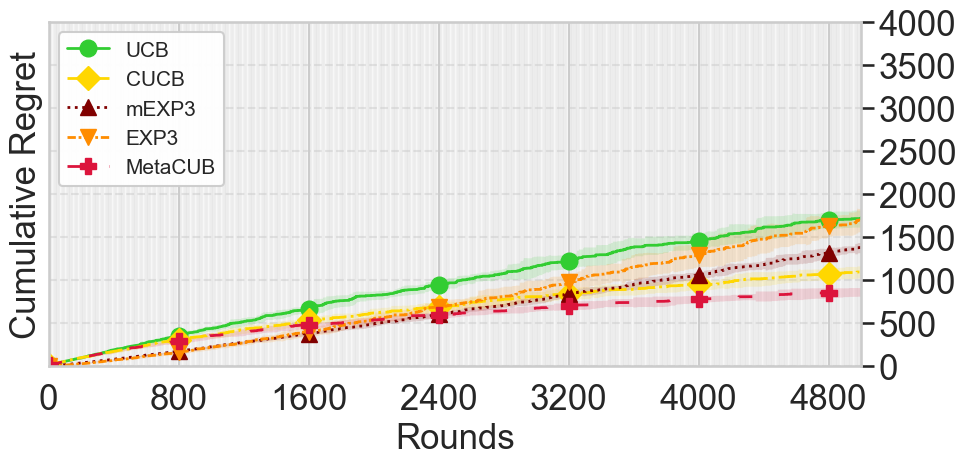}
        \caption{Immediate * non-linear }
        \label{fig:els_I_sub2}
    \end{subfigure}
    \begin{subfigure}[t]{0.48\linewidth}
        \includegraphics[width=\linewidth]{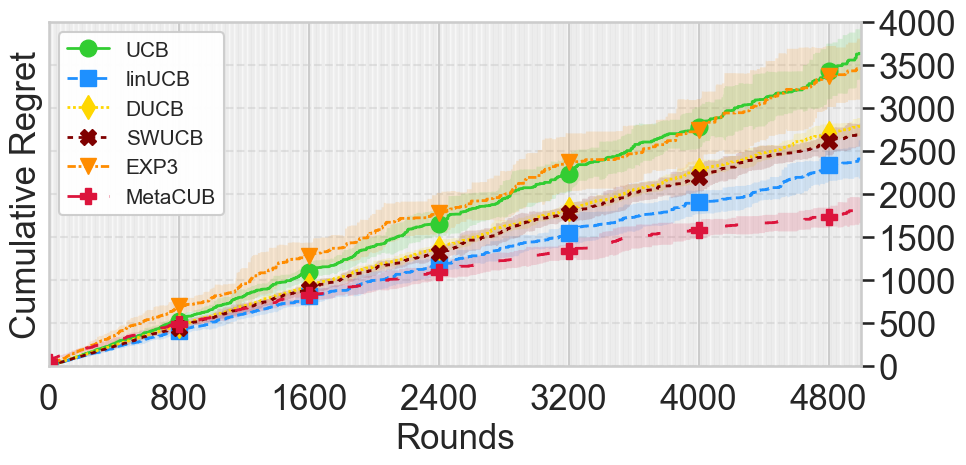}
        \caption{Delayed * linear}
        \label{fig:els_I_sub3}
    \end{subfigure}
    \hfill
    \begin{subfigure}[t]{0.48\linewidth}
        \includegraphics[width=\linewidth]{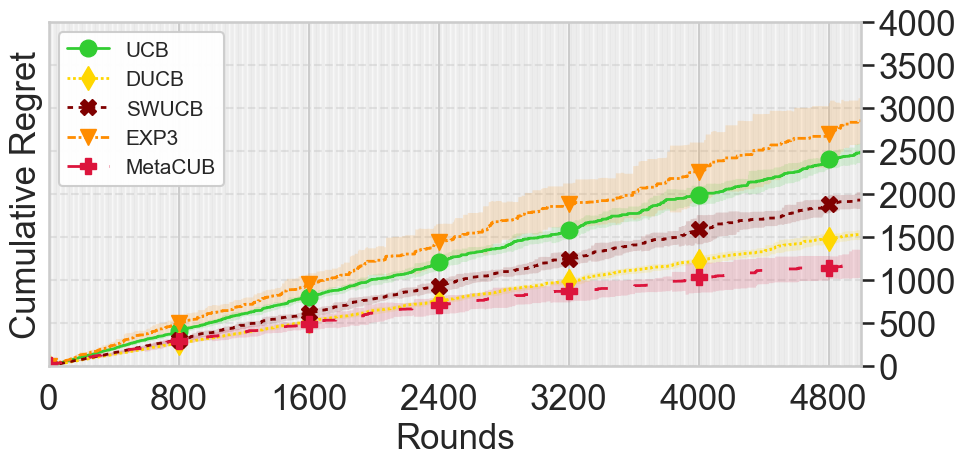}
        \caption{Delayed * non-linear}
        \label{fig:els_I_sub4}
    \end{subfigure}
    \caption{ELS, Delay Kernel Type-I: Cumulative regret.}
    \label{fig:regret_album_ELS_I}
\end{figure}

\begin{figure}[!ht]
    \centering
    \begin{subfigure}[t]{0.48\linewidth}
        \includegraphics[width=\linewidth]{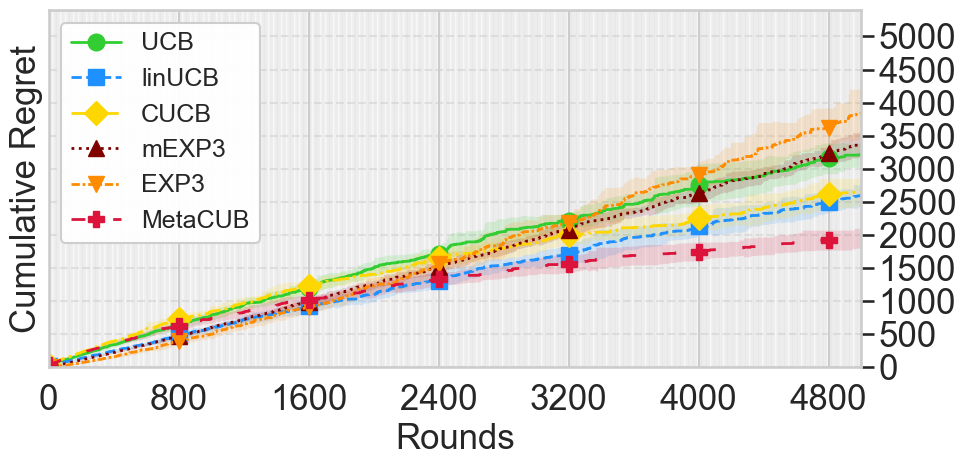}
        \caption{Immediate * linear }
        \label{fig:els_II_sub1}
    \end{subfigure}
    \hfill
    \begin{subfigure}[t]{0.48\linewidth}
        \includegraphics[width=\linewidth]{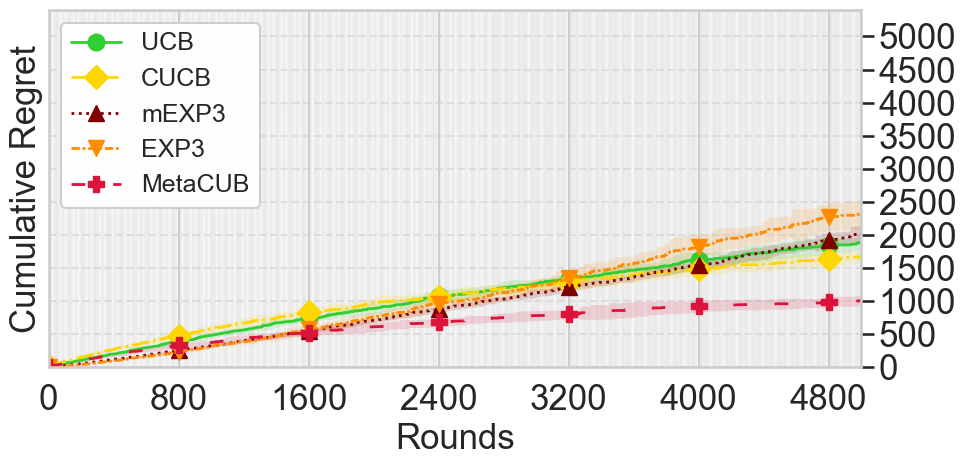}
        \caption{Immediate * non-linear}
        \label{fig:els_II_sub2}
    \end{subfigure}
    \begin{subfigure}[t]{0.48\linewidth}
        \includegraphics[width=\linewidth]{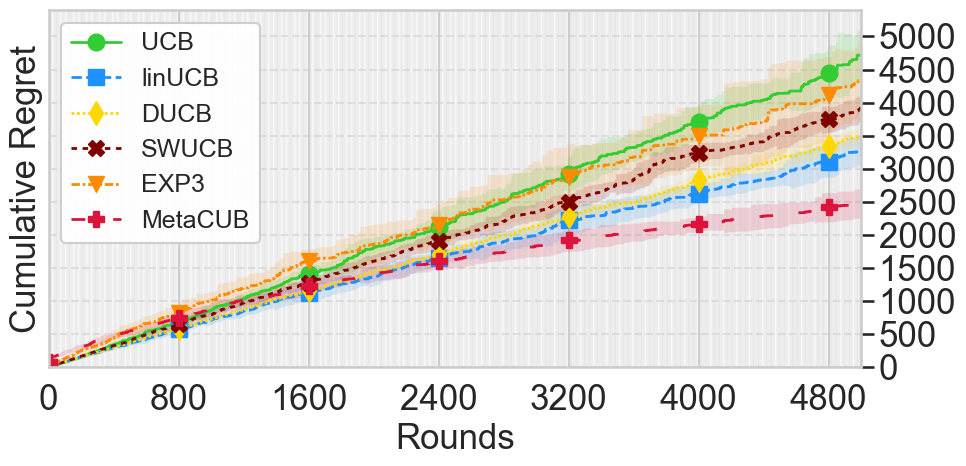}
        \caption{Delayed * linear}
        \label{fig:els_II_sub3}
    \end{subfigure}
    \hfill
    \begin{subfigure}[t]{0.48\linewidth}
        \includegraphics[width=\linewidth]{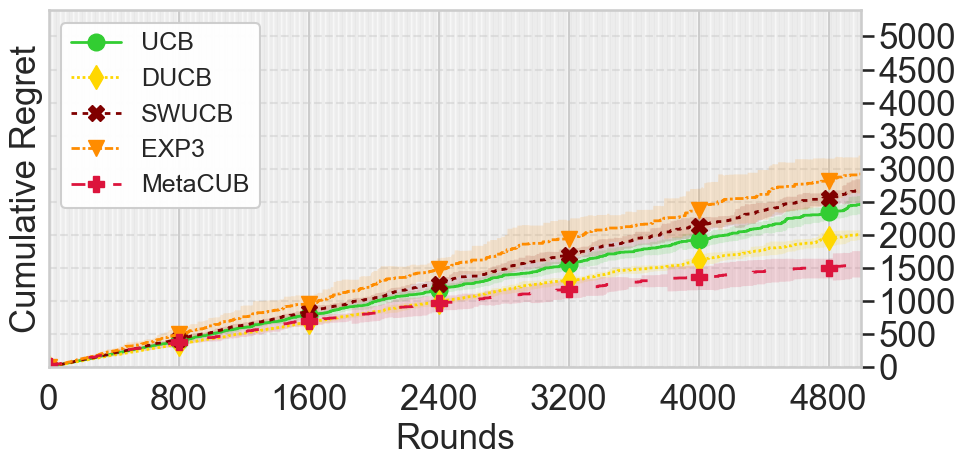}
        \caption{Delayed * non-linear}
        \label{fig:els_II_sub4}
    \end{subfigure}
    \caption{ELS, Delay Kernel Type-II: Cumulative regret.}
    \label{fig:regret_album_ELS_II}
\end{figure}

\begin{figure}[!ht]
    \centering
    \begin{subfigure}[t]{0.48\linewidth}
        \includegraphics[width=\linewidth]{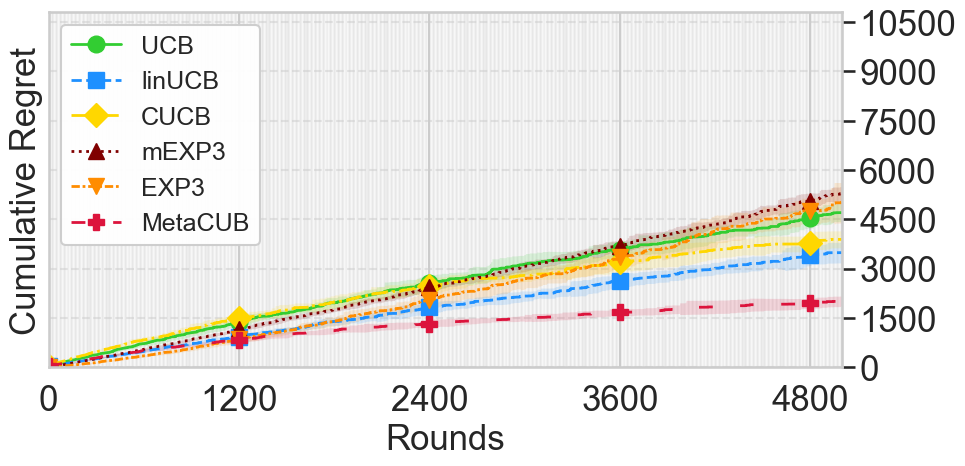}
        \caption{Immediate * linear}
        \label{fig:jobs_I_sub1}
    \end{subfigure}
    \hfill
    \begin{subfigure}[t]{0.48\linewidth}
        \includegraphics[width=\linewidth]{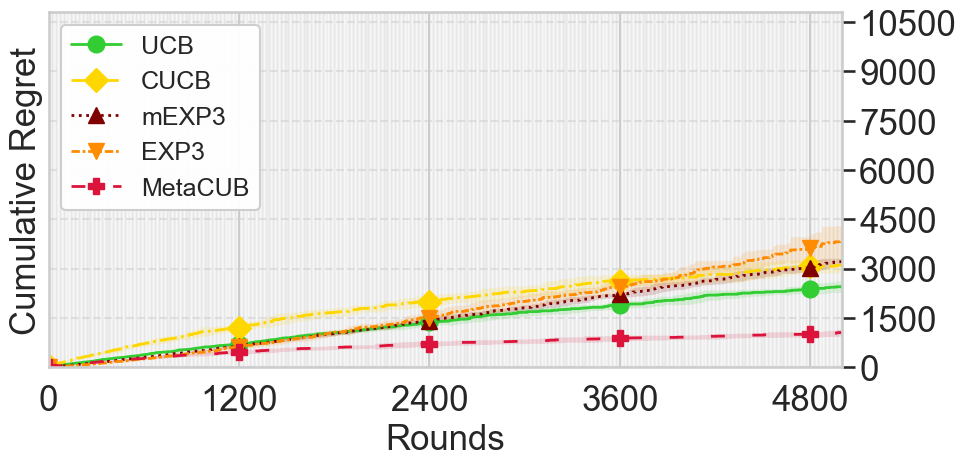}
        \caption{Immediate * non-linear }
        \label{fig:jobs_I_sub2}
    \end{subfigure}
    \begin{subfigure}[t]{0.48\linewidth}
        \includegraphics[width=\linewidth]{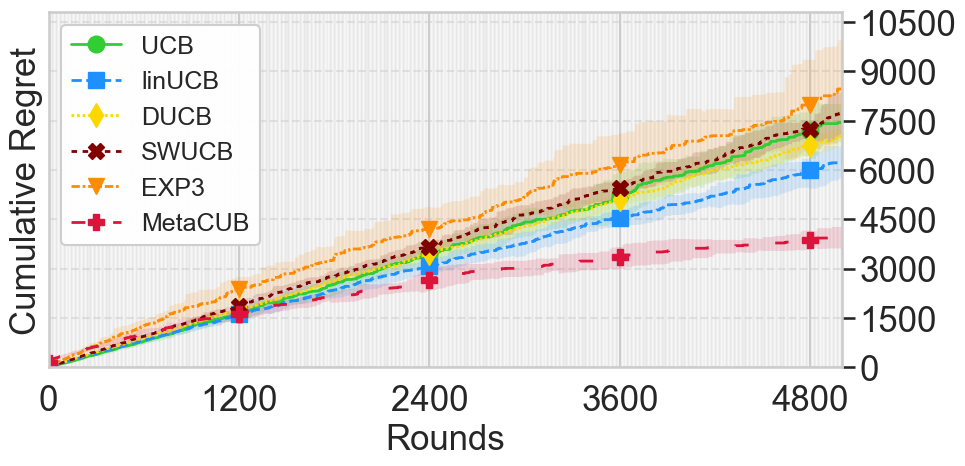}
        \caption{Delayed * linear}
        \label{fig:jobs_I_sub3}
    \end{subfigure}
    \hfill
    \begin{subfigure}[t]{0.48\linewidth}
        \includegraphics[width=\linewidth]{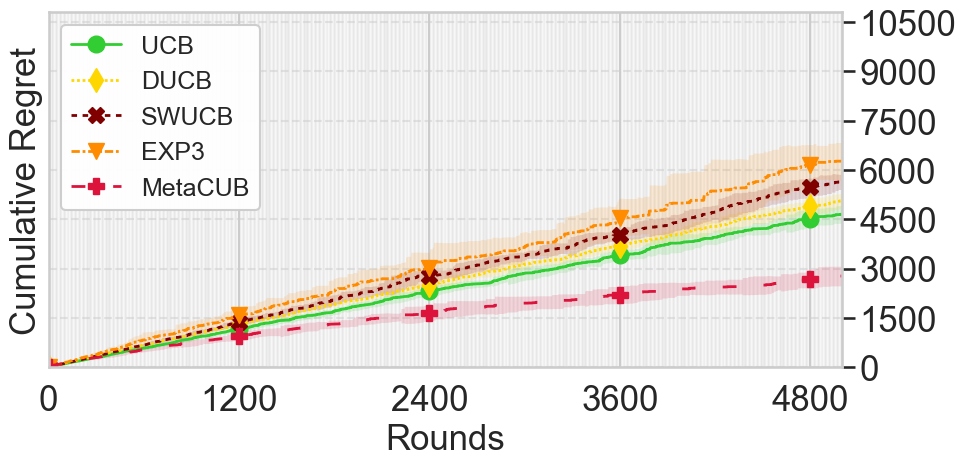}
        \caption{Delayed * non-linear}
        \label{fig:jobs_I_sub4}
    \end{subfigure}
    \caption{JOBS, Delay Kernel Type-I: Cumulative regret.}
    \label{fig:regret_album_JOBS_I}    
\end{figure}
\begin{figure}[!ht]
    \centering
    \begin{subfigure}[t]{0.48\linewidth}
        \includegraphics[width=\linewidth]{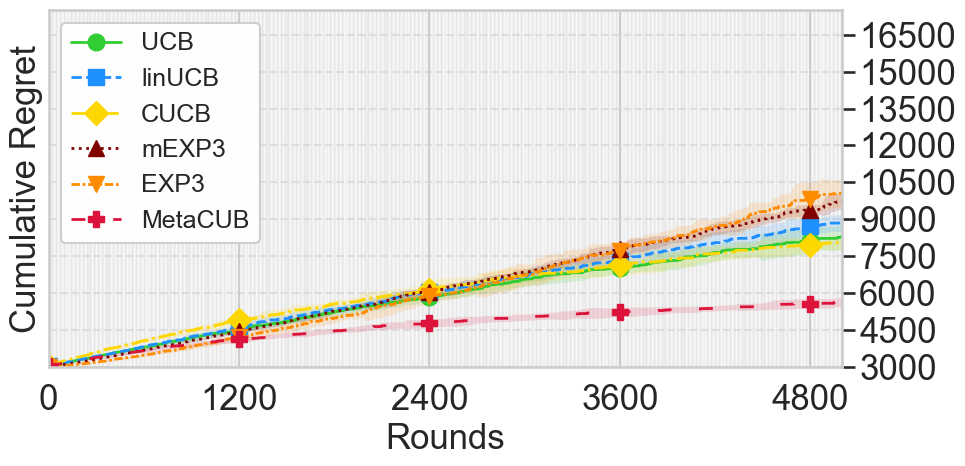}
        \caption{Immediate * linear}
        \label{fig:jobs_II_sub1}
    \end{subfigure}
    \hfill
    \begin{subfigure}[t]{0.48\linewidth}
        \includegraphics[width=\linewidth]{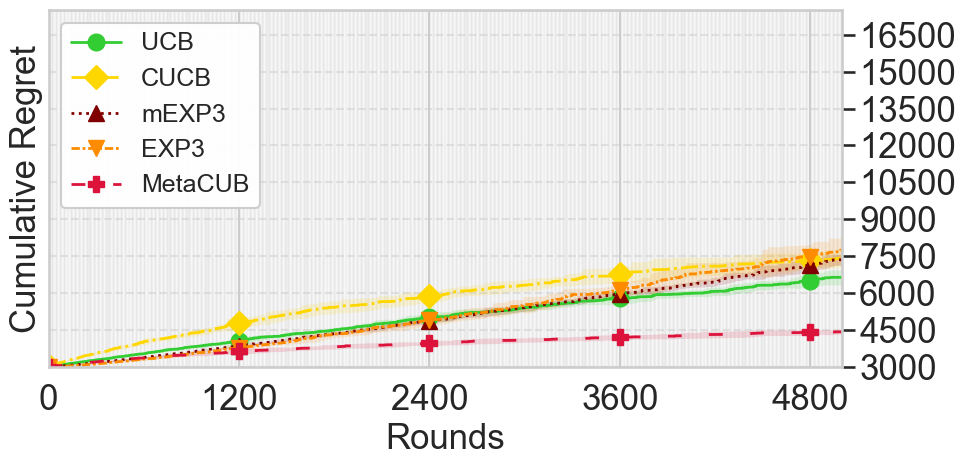}
        \caption{Immediate * non-linear}
        \label{fig:jobs_II_sub2}
    \end{subfigure}
    \begin{subfigure}[t]{0.48\linewidth}
        \includegraphics[width=\linewidth]{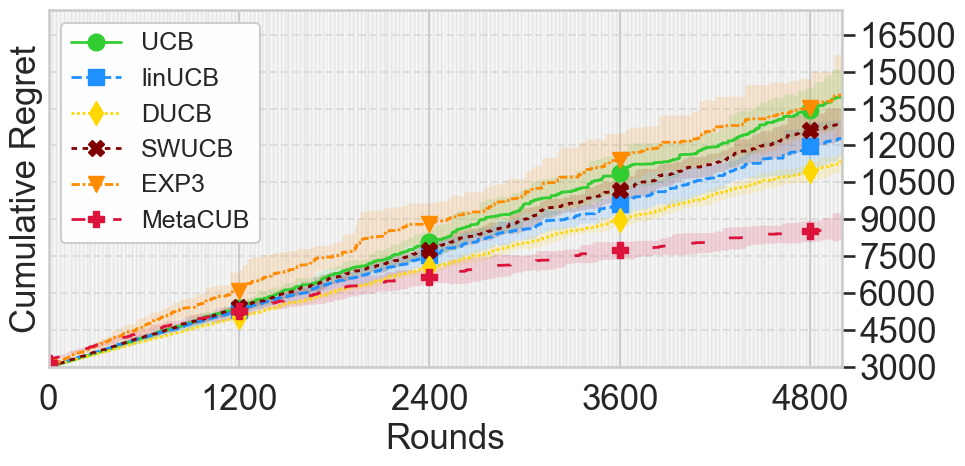}
        \caption{Delayed * linear }
        \label{fig:jobs_II_sub3}
    \end{subfigure}
    \hfill
    \begin{subfigure}[t]{0.48\linewidth}
        \includegraphics[width=\linewidth]{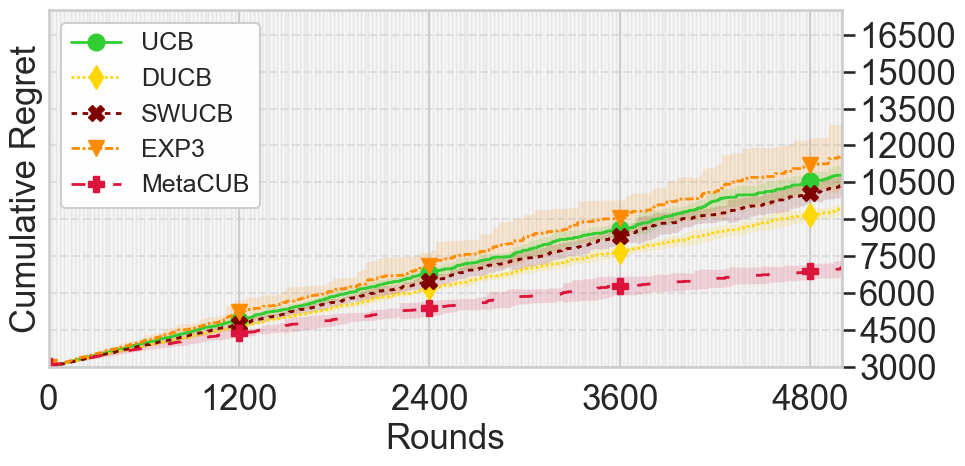}
        \caption{Delayed * non-linear}
        \label{fig:jobs_II_sub4}
    \end{subfigure}
    \caption{JOBS, Delay Kernel Type-II: Cumulative regret.}
    \label{fig:regret_album_JOBS_II}
\end{figure}

Moreover, the performance differences observed between both datasets experiments primarily arise from the distinct resource feedback delay kernels used in each scenario. Type-I kernels (Figure~\ref{fig:els_delay_kernels_type1}, and Figure~\ref{fig:jobs_delay_kernels_type1}) exhibit peaked, unimodal shapes with concentrated delay weights at early-to-mid normalized times, meaning that the majority of reward signals arrive relatively soon after allocation. This structure allows learning algorithms to receive informative feedback more rapidly, enabling faster adaptation and significantly lower cumulative regret—most notably for \emph{MetaCUB}, which leverages structured delay-awareness. In contrast, Type-II kernels (Figure~\ref{fig:els_delay_kernels_type2}, and Figure~\ref{fig:jobs_delay_kernels_type2}) are more flattened and dispersed, with broader support across the time axis. This results in temporally diluted reward signals and increased uncertainty, impeding the learning efficiency of delay-agnostic methods. Under these broader delays, baseline algorithms such as \emph{UCB}, \emph{EXP3}, and their variants accumulate regret more rapidly, particularly in the delayed linear case, while \emph{MetaCUB} remains consistently more resilient. These findings emphasize the practical importance of modeling heterogeneous and temporally diffuse feedback mechanisms when deploying learning-based allocation policies.
In both datasets, algorithms evaluated under nonlinear reward functions incur substantially lower regret than their linear counterparts. This demonstrates that expressive nonlinear representations, whether provided by flexible predictors or by kernelized policies, improve learning and decision making. Even classical methods such as \emph{UCB} and \emph{CUCB} benefit when the linearity assumption is relaxed, yielding more accurate reward estimates and more effective allocations. 
Building on this, we evaluate the fairness of allocation decisions by analyzing representational balance across subgroups. While Lemma~\ref{lemma1} supports fairness convergence, we complement it with empirical fairness ratios, defined as the proportion of selected individuals in each subgroup. As shown in Tables~\ref{tab:fairness_els_short}, and Tabels 4, and 5 in Appendix, \emph{MetaCUB} consistently achieves the most balanced subgroup coverage across both datasets and feedback regimes, validating its fairness-aware design.

\begin{table}[H]
\centering
\scriptsize
\renewcommand{\arraystretch}{1.15}
\setlength{\tabcolsep}{4.5pt}
\begin{tabular}{l|cc|cc|cc|cc}
\toprule
\textbf{Algs.} & \multicolumn{2}{c|}{\textbf{Asian}} & \multicolumn{2}{c|}{\textbf{White}} & \multicolumn{2}{c|}{\textbf{Black}} & \multicolumn{2}{c}{\textbf{Hispanic}} \\
\cmidrule{2-9}
 & \textbf{Imm} & \textbf{Del} & \textbf{Imm} & \textbf{Del} & \textbf{Imm} & \textbf{Del} & \textbf{Imm} & \textbf{Del} \\
\midrule
UCB       & 0.62 & 0.41 & 1.29 & 1.42 & 0.48 & 0.33 & 0.57 & 0.51 \\
CUCB      & 0.59 &  *   & 0.82 &   *  & 0.51 &  *   & 0.63 &  *   \\
EXP3      & 0.32 & 0.27 & 0.91 & 1.16 & 0.34 & 0.22 & 0.41 & 0.36 \\
mEXP3     & 0.45 &  *   & 1.27 &   *  & 0.36 &  *   & 0.28 &  *   \\
DUCB      &   *  & 0.22 &  *   & 1.05 &   *  & 0.57 &   *  & 0.52 \\
SWUCB     &   *  & 0.52 &  *   & 1.27 &   *  & 0.39 &   *  & 0.29 \\
\textbf{MetaCUB} & \textbf{0.84} & \textbf{1.02} & \textbf{1.03} & \textbf{0.96} & \textbf{1.02} & \textbf{1.00} & \textbf{0.98} & \textbf{0.97} \\
\bottomrule
\end{tabular}
\caption{ELS Allocation Fairness (full table in Appendix)}
\label{tab:fairness_els_short}
\end{table}

\section{Conclusion}
We propose \emph{MetaCUB}, a bi-level contextual bandit framework for fair and adaptive resource allocation under delayed feedback. \emph{MetaCUB} outperforms baselines in cumulative regret and achieves more balanced subgroup coverage, supported by both empirical results and theoretical guarantees.

\section{Appendix}\label{sec:appendix}
The appendix provides additional details on the algorithmic design, the proof of the disparity reduction lemma, fairness analysis and evaluation results, and dataset descriptions.
\subsection{Constrained Contextual MAB Algorithm}
\begin{algorithm}[!h]
\footnotesize
\caption{Constrained Contextual Bandit (UCB)}
\label{alg:ccmab}

\parbox[t]{\linewidth}{
\textbf{Inputs:} Resource budgets $\{b^{r}\}_{r \in R}$; Computational budget $T$; Outcome model $f$; Exploration schedule $\{\beta_t\}_{t=1}^T$
}

\parbox[t]{\linewidth}{
\textbf{Output:} Allocation sequence $\mathcal{A} = \{(i_t, r_t)\}_{t=1}^T$
}

\begin{algorithmic}[1]
\STATE Initialize: $\mathcal{A} \gets \emptyset$
\FOR{$t = 1$ to $T$}
    \STATE Observe pool $N$ and context $\{\mathbf{x}^i(t)\}_{i \in N}$
    \STATE $\mathcal{F}_t \gets \{(i, r) \in N \times R \;|\; b^r > 0\}$
    \IF{$\mathcal{F}_t = \emptyset$}
        \STATE \textbf{break}
    \ENDIF
    \FOR{each $(i, r) \in \mathcal{F}_t$}
        \STATE Predict outcome: $\hat{y}_{i,r} \gets f(\mathbf{x}^i(t))$
        \STATE Estimate uncertainty: $u_{i,r}(t) \gets \text{Uncertainty}(i, r, t)$
        \STATE Compute UCB score: $G_{i,r}(t) \gets \hat{y}_{i,r} + \beta_t \cdot u_{i,r}(t)$
    \ENDFOR
    \STATE Select assignment: $(i_t, r_t) \gets \mathop{\mathrm{arg\,max}}\limits_{(i, r) \in \mathcal{F}_t} G_{i,r}(t)$
    \STATE Allocate resource: $\mathcal{A} \gets \mathcal{A} \cup \{(i_t, r_t)\}$
    \STATE Update budget: $b^{r_t} \gets b^{r_t} - 1$
    \STATE Update model $f$ with feedback from $(\mathbf{x}^{i_t}(t), \hat{y}_{i_t, r_t}, r_t)$
\ENDFOR
\STATE \textbf{return} $\mathcal{A}$
\end{algorithmic}
\end{algorithm}
\footnotetext{This algorithm can be adapted to minimize cumulative regret instead of maximizing cumulative reward.}

\subsection{Proof of Lemma\ref{lemma1}} 
\begin{proof}
Let $K$ be the set of subgroups. For any allocation $\mathcal{A}$, define the disparity as
\begin{equation}
    \text{Disparity}(\mathcal{A}) := \max_{k \in K} \bar{y}_k(\mathcal{A}) - \min_{k \in K} \bar{y}_k(\mathcal{A})
\end{equation}
where $\bar{y}_k(\mathcal{A})$ denotes the average outcome for subgroup $k$ under allocation $\mathcal{A}$.
Assume, for contradiction, that
\begin{equation}
\label{eq:contra}
\text{Disparity}(\mathcal{A}_{\texttt{MetaCUB}}) > \text{Disparity}(\mathcal{A}_{\texttt{Flat}}) - \delta,\; \forall\, \delta > 0
\end{equation}
Let $T_m$ denote the number of meta-rounds and $f$ the fidelity of the outcome model.

\textit{Assumptions}:
\begin{itemize}
\item[(A1)] (Coverage) $\exists\, \gamma > 0$ such that for all $k \in K$, subgroup $k$ is selected in at least $\gamma T_m$ meta-rounds under $\mathcal{A}_{\texttt{MetaCUB}}$.
\item[(A2)] \parbox[t]{\dimexpr\linewidth-\labelwidth-\labelsep}{%
(Base-level convergence)\\ $\lim_{T_m \to \infty,\, f \to \infty} \bar{y}_k(\mathcal{A}_{\texttt{MetaCUB}}) = \bar{y}_k^*$, where $\bar{y}_k^*$ is the optimal within-group average outcome.%
}

\item[(A3)] \parbox[t]{\dimexpr\linewidth-\labelwidth-\labelsep}{%
(Flat imbalance) $\exists\, k^\dagger \in K$ such that the number of assignments to $k^\dagger$ under $\mathcal{A}_{\texttt{Flat}}$ is $o(T_m)$, implying $\bar{y}_{k^\dagger}(\mathcal{A}_{\texttt{Flat}}) < \bar{y}_{k^\dagger}^*$.%
}
\end{itemize}
Then, for sufficiently large $T_m$ and $f$,
\begin{equation}
    \bar{y}_k(\mathcal{A}_{\texttt{MetaCUB}}) \ge \bar{y}_k(\mathcal{A}_{\texttt{Flat}}), \quad \forall k \in K
\end{equation}
and
\begin{equation}
    \exists\, k^* \in K \text{ such that } \bar{y}_{k^*}(\mathcal{A}_{\texttt{MetaCUB}}) > \bar{y}_{k^*}(\mathcal{A}_{\texttt{Flat}})
\end{equation}
Hence,
\begin{equation}
    \min_k \bar{y}_k(\mathcal{A}_{\texttt{MetaCUB}}) \ge \min_k \bar{y}_k(\mathcal{A}_{\texttt{Flat}}) + \delta(T_m, f)
\end{equation}
for some $\delta(T_m, f) > 0$ increasing in $T_m$ and $f$.\\
Therefore,
\begin{align}
    \text{Disparity}(\mathcal{A}_{\texttt{MetaCUB}})=\\
    & \hspace{-1.6cm} \max_k \bar{y}_k(\mathcal{A}_{\texttt{MetaCUB}}) - \min_k \bar{y}_k(\mathcal{A}_{\texttt{MetaCUB}}) \\
    & \hspace{-3.1cm} \le \max_k \bar{y}_k(\mathcal{A}_{\texttt{Flat}}) - ( \min_k \bar{y}_k(\mathcal{A}_{\texttt{Flat}}) + \delta(T_m, f) ) \\
    &\hspace{-0.5cm}= \text{Disparity}(\mathcal{A}_{\texttt{Flat}}) - \delta(T_m, f)
\end{align}
contradicting \eqref{eq:contra}. The claim follows.
\end{proof}

\subsection{Baseline Methods and Characteristics}
\begin{table}[!h]
\centering
\scriptsize
\renewcommand{\arraystretch}{1.15}
\setlength{\tabcolsep}{4pt}
\caption{Baseline Algorithm Summary}
\label{tab:baselines}
\begin{tabular}{>{\centering\arraybackslash}m{2.6cm}|m{5.2cm}}
\toprule
\textbf{Algorithm} & 
 \multicolumn{1}{>{\centering\arraybackslash}m{5.2cm}}{\textbf{Description}} \\
\midrule
UCB & Classical bandit that treats each resource-recipient pair as an arm. Ignores context and group structure; selects based on reward optimism. \\
\midrule
LinUCB & A contextual bandit applying ridge regression to each arm’s context to estimate linear reward parameters and confidence bounds, capturing individual heterogeneity but ignoring group-level allocation.\\
\midrule
CUCB & Combinatorial semi-bandit that selects multiple group–resource arms per round; observes feedback only from selected pairs~\cite{chen2016combinatorial,wang2017improving}. \\
\midrule
EXP3 & Adversarial bandit using exponential weighting without stochastic reward assumptions; robust under partial and delayed feedback~\cite{auer2002nonstochastic}. \\
\midrule
mEXP3 & EXP3 over meta-arms, where each arm represents a full group–resource allocation policy~\cite{tang2021bandit}. \\
\midrule
DUCB & UCB variant with exponential decay ($\gamma$) on past rewards; adapts to non-stationary rewards~\cite{garivier2011upper,tang2021bandit}. \\
\midrule
SWUCB & UCB with sliding window of $\tau$ rounds; forgets older rewards to adapt to temporal changes~\cite{garivier2011upper}. \\
\bottomrule
\end{tabular}
\end{table}

\subsection{Datasets}

\textbf{Educational Longitudinal Study (ELS).}\label{subsec: ELS_data}
 The ELS dataset comprises individual student records, each endowed with a feature vector $\mathbf{x}^i\in\mathbb{R}^n$ representing a diverse set of demographic, academic, and contextual attributes. The features include continuous measures such as family income and cumulative grade point averages, ordinal scales for parental education and athletic involvement, binary indicators for gender and race categories, and categorical encodings for family composition and school urbanicity.  We denote the primary outcome as the cumulative GPA, treated as a continuous response.  Resource assignment follows a multi-resource protocol with four discrete resource types subject to resource-specific budget constraints $B^r$.\\
\textbf{JOBS Randomized Trial.}\label{subsec:JOBS_data}
The JOBS dataset merges the experimental National Supported Work (NSW) sample~\cite{lalonde1986evaluating} with a non-experimental control group from the Panel Study of Income Dynamics (PSID)~\cite{shalit2017estimating}, totaling $N=3{,}212$ individuals—297 treated and 2,915 controls. Each individual is represented by eight baseline covariates: age, education (years completed), Black/Hispanic race indicators, marital status, absence of a high-school diploma, and real earnings in 1974 and 1975. The primary continuous outcome is real earnings in 1978, with a derived binary indicator denoting employment status that year. This dataset is widely used in causal inference benchmarks to estimate job training effects on earnings and employment, offering rich pretreatment covariates and randomized treatment in its experimental subset~\cite{lalonde1986evaluating,shalit2017estimating}.
\subsubsection{Dataset Specifications Comparison.}
Table~\ref{tab:dataset_spec_comparison} summarizes and contrasts the two datasets used in our experiments.

\begin{table}[h]

\centering
\footnotesize
\begin{tabular}{lcc}
\toprule
\textbf{Specification} & \textbf{ELS} & \textbf{JOBS} \\
\midrule
No. Instances    & 1179          & 2935 \\
No. Features     & 40            & 8 \\
No. Resources    & 4             & 1 \\
Outcome type     & Continuous    & Binary \\
\bottomrule
\end{tabular}
\caption{JOBS and ELS Datasets Specification Comparison}
\label{tab:dataset_spec_comparison}
\end{table}

\subsection{Fairness Analysis}\label{sec:appendix-fairness}

Our setting focuses on high-stakes, dynamic resource allocation with delayed feedback, where ensuring equitable subgroup coverage is the primary institutional objective. In real-world socio-economic contexts—such as education, healthcare, workforce, and housing—fairness is typically operationalized at the subgroup level rather than through individual similarity metrics. Policy and regulatory frameworks consistently reflect this principle: Title VII enforcement applies the four-fifths (80\%) adverse-impact rule to ensure selection-rate parity across demographic groups; ECOA/Reg B fair-lending guidance addresses disparate impact across protected classes; HUD’s discriminatory-effects rule defines group-based disparate impact in housing; and public-health allocation frameworks (e.g., CDC/ACIP and National Academies) establish phase- or risk-group–based prioritization criteria. Similarly, education agencies and workforce programs set subgroup-specific targets or monitor disparities in subgroup outcomes to meet compliance and equity mandates.

Hence, the policy mandate we model is at the group level, and our guarantee matches that mandate. Notably, individual-level fairness does not guarantee group-level fairness. When one group is much larger, enforcing strict individual fairness can inadvertently widen inter-group disparities due to scale effects. This phenomenon is well documented in the fairness literature~\cite{fleisher2021s, castelnovo2022clarification}.
However, MetaCUB can incorporate individual-level fairness at the base level when properly defined. The only limitation is feasibility—if budgets or eligibility are too restrictive to meet both subgroup coverage and within-group fairness, the problem becomes infeasible or requires softened penalties. Finally, defining individual fairness under delayed outcomes is intrinsically harder: with stochastic delay kernels, “similar” individuals may realize impacts on different timelines, and rewards are only partially observed at decision time. This partial observability complicates any robust, similarity-based individual guarantee, further motivating a group-level fairness mandate for our application.

The following tables present the fairness results for all baseline algorithms and our proposed \textit{MetaCUB} across demographic subgroups in both datasets. Each value shows the average allocation disparity, the ratio of subgroup allocation rates to the overall mean, under immediate (\textbf{Imm}) and delayed (\textbf{Del}) feedback. Values closer to $1.0$ indicate more equitable allocations. As shown, \textit{MetaCUB} consistently attains near-parity across groups and feedback settings, confirming its robustness in maintaining fairness under both conditions.

\begin{table}[!h]
\centering
\scriptsize
\renewcommand{\arraystretch}{1.15}
\setlength{\tabcolsep}{3.8pt}
\begin{tabular}{l|cc|cc|cc|cc|cc}
\toprule
\textbf{Algs.} & \multicolumn{2}{c|}{\textbf{Asian}} & \multicolumn{2}{c|}{\textbf{White}} & \multicolumn{2}{c|}{\textbf{Black}} & \multicolumn{2}{c|}{\textbf{Hispanic}} & \multicolumn{2}{c}{\textbf{M-Race}} \\
\cmidrule{2-11}
 & \textbf{Imm} & \textbf{Del} & \textbf{Imm} & \textbf{Del} & \textbf{Imm} & \textbf{Del} & \textbf{Imm} & \textbf{Del} & \textbf{Imm} & \textbf{Del} \\
\midrule
UCB       & 0.62 & 0.41 & 1.29 & 1.42 & 0.48 & 0.33 & 0.57 & 0.51 & 0.44 & 0.31 \\
CUCB      & 0.59 &  *   & 0.82 &   *  & 0.51 &  *   & 0.63 &  *   & 0.71 &  *   \\
EXP3      & 0.32 & 0.27 & 0.91 & 1.16 & 0.34 & 0.22 & 0.41 & 0.36 & 0.58 & 0.44 \\
mEXP3     & 0.45 &  *   & 1.27 &   *  & 0.36 &  *   & 0.28 &  *   & 0.48 &  *   \\
DUCB      &   *  & 0.22 &  *   & 1.05 &   *  & 0.57 &   *  & 0.52 &   *  & 0.57 \\
SWUCB     &   *  & 0.52 &  *   & 1.27 &   *  & 0.39 &   *  & 0.29 &   *  & 0.25 \\
\textbf{MetaCUB} & \textbf{0.84} & \textbf{1.02} & \textbf{1.03} & \textbf{0.96} & \textbf{1.02} & \textbf{1.00} & \textbf{0.98} & \textbf{0.97} & \textbf{1.01} & \textbf{0.97} \\
\bottomrule
\end{tabular}
\caption{Full fairness scores for average allocation disparity on the ELS dataset.}
\label{tab:fairness_els}
\end{table}

\begin{table}[h]

\centering
\scriptsize
\renewcommand{\arraystretch}{1.15}
\setlength{\tabcolsep}{6pt}
\begin{tabular}{l|cc|cc}
\toprule
\textbf{Algs.} & \multicolumn{2}{c|}{\textbf{Black}} & \multicolumn{2}{c}{\textbf{Hispanic}} \\
\cmidrule{2-5}
 & \textbf{Imm} & \textbf{Del} & \textbf{Imm} & \textbf{Del} \\
\midrule
UCB       & 0.76 & 0.54 & 0.62 & 0.86 \\
CUCB      & 1.18 &   *  & 0.81 &   *  \\
EXP3      & 1.20 & 0.74 & 0.79 & 0.66 \\
mEXP3     & 1.12 &   *  & 0.87 &   *  \\
DUCB      &   *  & 1.29 &   *  & 0.79 \\
SWUCB     &   *  & 1.30 &   *  & 0.75 \\
\textbf{MetaCUB} & \textbf{1.03} & \textbf{0.98} & \textbf{0.99} & \textbf{0.98} \\
\bottomrule
\end{tabular}
\caption{Full fairness scores for average allocation disparity on the JOBS dataset.}
\label{tab:fairness_jobs}
\end{table}

\bibliography{aaai2026}

@misc{els2002,
  author = {National Center for Education Statistics},
  title = {Education Longitudinal Study of 2002 (ELS:2002)},
  year = {2025},
  url = {https://nces.ed.gov/surveys/els2002/},
  note = {Accessed: July 15, 2025}
}

@misc{jobs_nsw_psid,
  author = {Dehejia, Rajeev and Wahba, Sadek},
  title = {NSW/PSID Job Training Data (Jobs dataset)},
  year = {2025},
  url = {https://users.nber.org/~rdehejia/nswdata2.html},
  note = {Accessed: July 15, 2025}
}

@article{lalonde1986evaluating,
  title={Evaluating the econometric evaluations of training programs with experimental data},
  author={LaLonde, Robert J},
  journal={The American economic review},
  pages={604--620},
  year={1986},
  publisher={JSTOR}
}

@inproceedings{shalit2017estimating,
  title={Estimating individual treatment effect: generalization bounds and algorithms},
  author={Shalit, Uri and Johansson, Fredrik D and Sontag, David},
  booktitle={International conference on machine learning},
  pages={3076--3085},
  year={2017},
  organization={PMLR}
}

@article{lane2017equity,
  title={Equity in healthcare resource allocation decision making: a systematic review},
  author={Lane, Haylee and Sarkies, Mitchell and Martin, Jennifer and Haines, Terry},
  journal={Social science \& medicine},
  volume={175},
  pages={11--27},
  year={2017},
  publisher={Elsevier}
}

@article{aktacs2007decision,
  title={A decision support system to improve the efficiency of resource allocation in healthcare management},
  author={Akta{\c{s}}, Emel and {\"U}lengin, F{\"u}sun and {\c{S}}ahin, {\c{S}}ule {\"O}nsel},
  journal={Socio-Economic Planning Sciences},
  volume={41},
  number={2},
  pages={130--146},
  year={2007},
  publisher={Elsevier}
}

@article{daniels2016resource,
  title={Resource allocation and priority setting},
  author={Daniels, Norman and del Pilar Guzm{\'a}n Urrea, Mar{\'\i}a and Rentmeester, Christy A and Kotchian, Sarah Ann and Fontaine, Sherry and Hern{\'a}ndez-Aguado, Ildefonso and Lumbreras, Blanca and Blacksher, Erika and Goold, Susan D and G{\'o}mez, M In{\'e}s and others},
  journal={Public health ethics: Cases spanning the globe},
  pages={61--94},
  year={2016},
  publisher={Springer International Publishing Cham}
}

@article{su2019resource,
  title={Resource allocation for network slicing in 5G telecommunication networks: A survey of principles and models},
  author={Su, Ruoyu and Zhang, Dengyin and Venkatesan, Ramachandran and Gong, Zijun and Li, Cheng and Ding, Fei and Jiang, Fan and Zhu, Ziyang},
  journal={IEEE Network},
  volume={33},
  number={6},
  pages={172--179},
  year={2019},
  publisher={IEEE}
}

@article{hui2002resource,
  title={Resource allocation for broadband networks},
  author={Hui, Joseph Y},
  journal={IEEE Journal on selected areas in communications},
  volume={6},
  number={9},
  pages={1598--1608},
  year={2002},
  publisher={IEEE}
}

@inproceedings{gibney1998dynamic,
  title={Dynamic resource allocation by market-based routing in telecommunications networks},
  author={Gibney, MA and Jennings, Nicholas R},
  booktitle={International Workshop on Intelligent Agents for Telecommunication Applications},
  pages={102--117},
  year={1998},
  organization={Springer}
}

@article{monk1981toward,
  title={Toward a multilevel perspective on the allocation of educational resources},
  author={Monk, David H},
  journal={Review of Educational Research},
  volume={51},
  number={2},
  pages={215--236},
  year={1981},
  publisher={Sage Publications Sage CA: Thousand Oaks, CA}
}

@article{liefner2003funding,
  title={Funding, resource allocation, and performance in higher education systems},
  author={Liefner, Ingo},
  journal={Higher education},
  volume={46},
  number={4},
  pages={469--489},
  year={2003},
  publisher={Springer}
}

@book{massy1996resource,
  title={Resource allocation in higher education},
  author={Massy, William F},
  year={1996},
  publisher={University of Michigan Press}
}

@article{nguyen2014computational,
  title={Computational complexity and approximability of social welfare optimization in multiagent resource allocation},
  author={Nguyen, Nhan-Tam and Nguyen, Trung Thanh and Roos, Magnus and Rothe, J{\"o}rg},
  journal={Autonomous agents and multi-agent systems},
  volume={28},
  number={2},
  pages={256--289},
  year={2014},
  publisher={Springer}
}

@inproceedings{roos2010complexity,
  title={Complexity of social welfare optimization in multiagent resource allocation},
  author={Roos, Magnus and Rothe, J{\"o}rg},
  booktitle={Proceedings of the 9th International Conference on Autonomous Agents and Multiagent Systems: volume 1-Volume 1},
  pages={641--648},
  year={2010}
}

@article{hegazy1999optimization,
  title={Optimization of resource allocation and leveling using genetic algorithms},
  author={Hegazy, Tarek},
  journal={Journal of construction engineering and management},
  volume={125},
  number={3},
  pages={167--175},
  year={1999},
  publisher={American Society of Civil Engineers}
}

@article{gong2012efficient,
  title={An efficient resource allocation scheme using particle swarm optimization},
  author={Gong, Yue-Jiao and Zhang, Jun and Chung, Henry Shu-Hung and Chen, Wei-Neng and Zhan, Zhi-Hui and Li, Yun and Shi, Yu-Hui},
  journal={IEEE Transactions on Evolutionary Computation},
  volume={16},
  number={6},
  pages={801--816},
  year={2012},
  publisher={IEEE}
}

@article{obermeyer2019dissecting,
  title={Dissecting racial bias in an algorithm used to manage the health of populations},
  author={Obermeyer, Ziad and Powers, Brian and Vogeli, Christine and Mullainathan, Sendhil},
  journal={Science},
  volume={366},
  number={6464},
  pages={447--453},
  year={2019},
  publisher={American Association for the Advancement of Science}
}

@article{chouldechova2017fair,
  title={Fair prediction with disparate impact: A study of bias in recidivism prediction instruments},
  author={Chouldechova, Alexandra},
  journal={Big data},
  volume={5},
  number={2},
  pages={153--163},
  year={2017},
  publisher={Mary Ann Liebert, Inc. 140 Huguenot Street, 3rd Floor New Rochelle, NY 10801 USA}
}

@article{tang2021bandit,
  title={Bandit learning with delayed impact of actions},
  author={Tang, Wei and Ho, Chien-Ju and Liu, Yang},
  journal={Advances in Neural Information Processing Systems},
  volume={34},
  pages={26804--26817},
  year={2021}
}

@inproceedings{zou2019reinforcement,
  title={Reinforcement learning to optimize long-term user engagement in recommender systems},
  author={Zou, Lixin and Xia, Long and Ding, Zhuoye and Song, Jiaxing and Liu, Weidong and Yin, Dawei},
  booktitle={Proceedings of the 25th ACM SIGKDD international conference on knowledge discovery \& data mining},
  pages={2810--2818},
  year={2019}
}

@inproceedings{wang2022surrogate,
  title={Surrogate for long-term user experience in recommender systems},
  author={Wang, Yuyan and Sharma, Mohit and Xu, Can and Badam, Sriraj and Sun, Qian and Richardson, Lee and Chung, Lisa and Chi, Ed H and Chen, Minmin},
  booktitle={Proceedings of the 28th ACM SIGKDD conference on knowledge discovery and data mining},
  pages={4100--4109},
  year={2022}
}

@article{hanna2020mortality,
  title={Mortality due to cancer treatment delay: systematic review and meta-analysis},
  author={Hanna, Timothy P and King, Will D and Thibodeau, Stephane and Jalink, Matthew and Paulin, Gregory A and Harvey-Jones, Elizabeth and O’Sullivan, Dylan E and Booth, Christopher M and Sullivan, Richard and Aggarwal, Ajay},
  journal={bmj},
  volume={371},
  year={2020},
  publisher={British Medical Journal Publishing Group}
}

@inproceedings{grover2018best,
  title={Best arm identification in multi-armed bandits with delayed feedback},
  author={Grover, Aditya and Markov, Todor and Attia, Peter and Jin, Norman and Perkins, Nicolas and Cheong, Bryan and Chen, Michael and Yang, Zi and Harris, Stephen and Chueh, William and others},
  booktitle={International conference on artificial intelligence and statistics},
  pages={833--842},
  year={2018},
  organization={PMLR}
}

@inproceedings{gyorgy2021adapting,
  title={Adapting to delays and data in adversarial multi-armed bandits},
  author={Gyorgy, Andras and Joulani, Pooria},
  booktitle={International Conference on Machine Learning},
  pages={3988--3997},
  year={2021},
  organization={PMLR}
}

@inproceedings{joulani2013online,
  title={Online learning under delayed feedback},
  author={Joulani, Pooria and Gyorgy, Andras and Szepesv{\'a}ri, Csaba},
  booktitle={International conference on machine learning},
  pages={1453--1461},
  year={2013},
  organization={PMLR}
}

@article{almalki2022effect,
  title={The Effect of Immediate and Delayed Feedback in Virtual Classes on Mathematics Students' Higher Order Thinking Skills.},
  author={Almalki, Abdulaziz Derwesh A and Mohammed, Abdellah Ibrahim},
  journal={Journal of Positive School Psychology},
  volume={6},
  number={6},
  year={2022}
}

@article{yanovski2014long,
  title={Long-term drug treatment for obesity: a systematic and clinical review},
  author={Yanovski, Susan Z and Yanovski, Jack A},
  journal={Jama},
  volume={311},
  number={1},
  pages={74--86},
  year={2014},
  publisher={American Medical Association}
}

@article{barnett1995long,
  title={Long-term effects of early childhood programs on cognitive and school outcomes},
  author={Barnett, W Steven},
  journal={The future of children},
  pages={25--50},
  year={1995},
  publisher={JSTOR}
}

@article{edmondson2019co,
  title={The co-evolution of policy mixes and socio-technical systems: Towards a conceptual framework of policy mix feedback in sustainability transitions},
  author={Edmondson, Duncan L and Kern, Florian and Rogge, Karoline S},
  journal={Research Policy},
  volume={48},
  number={10},
  pages={103555},
  year={2019},
  publisher={Elsevier}
}

@article{kuang2023posterior,
  title={Posterior sampling with delayed feedback for reinforcement learning with linear function approximation},
  author={Kuang, Nikki Lijing and Yin, Ming and Wang, Mengdi and Wang, Yu-Xiang and Ma, Yian},
  journal={Advances in neural information processing systems},
  volume={36},
  pages={6782--6824},
  year={2023}
}

@article{yin2023long,
  title={Long-term fairness with unknown dynamics},
  author={Yin, Tongxin and Raab, Reilly and Liu, Mingyan and Liu, Yang},
  journal={Advances in Neural Information Processing Systems},
  volume={36},
  pages={55110--55139},
  year={2023}
}

@inproceedings{shi2023statistical,
  title={Statistical inference on multi-armed bandits with delayed feedback},
  author={Shi, Lei and Wang, Jingshen and Wu, Tianhao},
  booktitle={International Conference on Machine Learning},
  pages={31328--31352},
  year={2023},
  organization={PMLR}
}

@inproceedings{lancewicki2021stochastic,
  title={Stochastic multi-armed bandits with unrestricted delay distributions},
  author={Lancewicki, Tal and Segal, Shahar and Koren, Tomer and Mansour, Yishay},
  booktitle={International Conference on Machine Learning},
  pages={5969--5978},
  year={2021},
  organization={PMLR}
}

@inproceedings{zuo2021combinatorial,
  title={Combinatorial multi-armed bandits for resource allocation},
  author={Zuo, Jinhang and Joe-Wong, Carlee},
  booktitle={2021 55th Annual Conference on Information Sciences and Systems (CISS)},
  pages={1--4},
  year={2021},
  organization={IEEE}
}

@article{wang2019distributed,
  title={Distributed bandit learning: Near-optimal regret with efficient communication},
  author={Wang, Yuanhao and Hu, Jiachen and Chen, Xiaoyu and Wang, Liwei},
  journal={arXiv preprint arXiv:1904.06309},
  year={2019}
}

@inproceedings{li2022efficient,
  title={Efficient resource allocation with fairness constraints in restless multi-armed bandits},
  author={Li, Dexun and Varakantham, Pradeep},
  booktitle={Uncertainty in Artificial Intelligence},
  pages={1158--1167},
  year={2022},
  organization={PMLR}
}

@article{patil2021achieving,
  title={Achieving fairness in the stochastic multi-armed bandit problem},
  author={Patil, Vishakha and Ghalme, Ganesh and Nair, Vineet and Narahari, Yadati},
  journal={Journal of Machine Learning Research},
  volume={22},
  number={174},
  pages={1--31},
  year={2021}
}

@article{kuleshov2014algorithms,
  title={Algorithms for multi-armed bandit problems},
  author={Kuleshov, Volodymyr and Precup, Doina},
  journal={arXiv preprint arXiv:1402.6028},
  year={2014}
}

@inproceedings{agrawal2012analysis,
  title={Analysis of thompson sampling for the multi-armed bandit problem},
  author={Agrawal, Shipra and Goyal, Navin},
  booktitle={Conference on learning theory},
  pages={39--1},
  year={2012},
  organization={JMLR Workshop and Conference Proceedings}
}

@article{huo2017risk,
  title={Risk-aware multi-armed bandit problem with application to portfolio selection},
  author={Huo, Xiaoguang and Fu, Feng},
  journal={Royal Society open science},
  volume={4},
  number={11},
  pages={171377},
  year={2017},
  publisher={The Royal Society Publishing}
}

@inproceedings{bouneffouf2020survey,
  title={Survey on applications of multi-armed and contextual bandits},
  author={Bouneffouf, Djallel and Rish, Irina and Aggarwal, Charu},
  booktitle={2020 IEEE congress on evolutionary computation (CEC)},
  pages={1--8},
  year={2020},
  organization={IEEE}
}

@article{biswas2021learn,
  title={Learn to intervene: An adaptive learning policy for restless bandits in application to preventive healthcare},
  author={Biswas, Arpita and Aggarwal, Gaurav and Varakantham, Pradeep and Tambe, Milind},
  journal={arXiv preprint arXiv:2105.07965},
  year={2021}
}

@article{zhou2015survey,
  title={A survey on contextual multi-armed bandits},
  author={Zhou, Li},
  journal={arXiv preprint arXiv:1508.03326},
  year={2015}
}

@inproceedings{kaufmann2012bayesian,
  title={On Bayesian upper confidence bounds for bandit problems},
  author={Kaufmann, Emilie and Capp{\'e}, Olivier and Garivier, Aur{\'e}lien},
  booktitle={Artificial intelligence and statistics},
  pages={592--600},
  year={2012},
  organization={PMLR}
}

@article{cui2019multi,
  title={Multi-agent reinforcement learning-based resource allocation for UAV networks},
  author={Cui, Jingjing and Liu, Yuanwei and Nallanathan, Arumugam},
  journal={IEEE Transactions on Wireless Communications},
  volume={19},
  number={2},
  pages={729--743},
  year={2019},
  publisher={IEEE}
}

@inproceedings{agrawal2013thompson,
  title={Thompson sampling for contextual bandits with linear payoffs},
  author={Agrawal, Shipra and Goyal, Navin},
  booktitle={International conference on machine learning},
  pages={127--135},
  year={2013},
  organization={PMLR}
}

@article{xu2020collaborative,
  title={Collaborative multi-agent multi-armed bandit learning for small-cell caching},
  author={Xu, Xianzhe and Tao, Meixia and Shen, Cong},
  journal={IEEE Transactions on Wireless Communications},
  volume={19},
  number={4},
  pages={2570--2585},
  year={2020},
  publisher={IEEE}
}

@inproceedings{gael2020stochastic,
  title={Stochastic bandits with arm-dependent delays},
  author={Gael, Manegueu Anne and Vernade, Claire and Carpentier, Alexandra and Valko, Michal},
  booktitle={International Conference on Machine Learning},
  pages={3348--3356},
  year={2020},
  organization={PMLR}
}

@article{vernade2017stochastic,
  title={Stochastic bandit models for delayed conversions},
  author={Vernade, Claire and Capp{\'e}, Olivier and Perchet, Vianney},
  journal={arXiv preprint arXiv:1706.09186},
  year={2017}
}

@inproceedings{steiger2022learning,
  title={Learning from delayed semi-bandit feedback under strong fairness guarantees},
  author={Steiger, Juaren and Li, Bin and Lu, Ning},
  booktitle={IEEE INFOCOM 2022-IEEE Conference on Computer Communications},
  pages={1379--1388},
  year={2022},
  organization={IEEE}
}

@inproceedings{erez2024regret,
  title={Regret Guarantees for Adversarial Contextual Bandits with Delayed Feedback},
  author={Erez, Liad and Levy, Orin and Mansour, Yishay},
  booktitle={Seventeenth European Workshop on Reinforcement Learning},
  year={2024}
}

@article{badanidiyuru2018bandits,
  title={Bandits with knapsacks},
  author={Badanidiyuru, Ashwinkumar and Kleinberg, Robert and Slivkins, Aleksandrs},
  journal={Journal of the ACM (JACM)},
  volume={65},
  number={3},
  pages={1--55},
  year={2018},
  publisher={ACM New York, NY, USA}
}

@inproceedings{tran2012knapsack,
  title={Knapsack based optimal policies for budget--limited multi--armed bandits},
  author={Tran-Thanh, Long and Chapman, Archie and Rogers, Alex and Jennings, Nicholas},
  booktitle={Proceedings of the AAAI Conference on Artificial Intelligence},
  volume={26},
  pages={1134--1140},
  year={2012}
}

@article{burnetas2025optimal,
  title={Optimal data driven resource allocation under multi-armed bandit observations},
  author={Burnetas, Apostolos N and Kanavetas, Odysseas and Katehakis, Michael N},
  journal={Annals of Operations Research},
  pages={1--28},
  year={2025},
  publisher={Springer}
}

@inproceedings{chen2020fair,
  title={Fair contextual multi-armed bandits: Theory and experiments},
  author={Chen, Yifang and Cuellar, Alex and Luo, Haipeng and Modi, Jignesh and Nemlekar, Heramb and Nikolaidis, Stefanos},
  booktitle={Conference on Uncertainty in Artificial Intelligence},
  pages={181--190},
  year={2020},
  organization={PMLR}
}

@inproceedings{claure2020multi,
  title={Multi-armed bandits with fairness constraints for distributing resources to human teammates},
  author={Claure, Houston and Chen, Yifang and Modi, Jignesh and Jung, Malte and Nikolaidis, Stefanos},
  booktitle={Proceedings of the 2020 ACM/IEEE International Conference on Human-Robot Interaction},
  pages={299--308},
  year={2020}
}

@article{liu2012learning,
  title={Learning in a changing world: Restless multiarmed bandit with unknown dynamics},
  author={Liu, Haoyang and Liu, Keqin and Zhao, Qing},
  journal={IEEE Transactions on Information Theory},
  volume={59},
  number={3},
  pages={1902--1916},
  year={2012},
  publisher={IEEE}
}

@article{chen2022uncertainty,
  title={Uncertainty-of-information scheduling: A restless multiarmed bandit framework},
  author={Chen, Gongpu and Liew, Soung Chang and Shao, Yulin},
  journal={IEEE Transactions on Information Theory},
  volume={68},
  number={9},
  pages={6151--6173},
  year={2022},
  publisher={IEEE}
}

@inproceedings{mate2022field,
  title={Field study in deploying restless multi-armed bandits: Assisting non-profits in improving maternal and child health},
  author={Mate, Aditya and Madaan, Lovish and Taneja, Aparna and Madhiwalla, Neha and Verma, Shresth and Singh, Gargi and Hegde, Aparna and Varakantham, Pradeep and Tambe, Milind},
  booktitle={Proceedings of the AAAI Conference on Artificial Intelligence},
  number={11},
  pages={12017--12025},
  year={2022}
}

@article{chen2016combinatorial,
  title={Combinatorial multi-armed bandit and its extension to probabilistically triggered arms},
  author={Chen, Wei and Wang, Yajun and Yuan, Yang and Wang, Qinshi},
  journal={Journal of Machine Learning Research},
  volume={17},
  number={50},
  pages={1--33},
  year={2016}
}

@article{wang2017improving,
  title={Improving regret bounds for combinatorial semi-bandits with probabilistically triggered arms and its applications},
  author={Wang, Qinshi and Chen, Wei},
  journal={Advances in Neural Information Processing Systems},
  volume={30},
  year={2017}
}

@article{auer2002nonstochastic,
  title={The nonstochastic multiarmed bandit problem},
  author={Auer, Peter and Cesa-Bianchi, Nicolo and Freund, Yoav and Schapire, Robert E},
  journal={SIAM journal on computing},
  volume={32},
  number={1},
  pages={48--77},
  year={2002},
  publisher={SIAM}
}

@inproceedings{garivier2011upper,
  title={On upper-confidence bound policies for switching bandit problems},
  author={Garivier, Aur{\'e}lien and Moulines, Eric},
  booktitle={International conference on algorithmic learning theory},
  pages={174--188},
  year={2011},
  organization={Springer}
}

@inproceedings{lu2010contextual,
  title={Contextual multi-armed bandits},
  author={Lu, Tyler and P{\'a}l, D{\'a}vid and P{\'a}l, Martin},
  booktitle={Proceedings of the Thirteenth international conference on Artificial Intelligence and Statistics},
  pages={485--492},
  year={2010},
  organization={JMLR Workshop and Conference Proceedings}
}

@article{weiner2012search,
  title={In search of synergy: strategies for combining interventions at multiple levels},
  author={Weiner, Bryan J and Lewis, Megan A and Clauser, Steven B and Stitzenberg, Karyn B},
  journal={Journal of the National Cancer Institute Monographs},
  volume={2012},
  number={44},
  pages={34--41},
  year={2012},
  publisher={Oxford University Press}
}

@article{legare2018interventions,
  title={Interventions for increasing the use of shared decision making by healthcare professionals},
  author={L{\'e}gar{\'e}, France and Adekpedjou, Rh{\'e}da and Stacey, Dawn and Turcotte, St{\'e}phane and Kryworuchko, Jennifer and Graham, Ian D and Lyddiatt, Anne and Politi, Mary C and Thomson, Richard and Elwyn, Glyn and others},
  journal={Cochrane database of systematic reviews},
  number={7},
  volume={21},
  year={2018},
  publisher={John Wiley \& Sons, Ltd}
}

@inproceedings{li2010contextual,
  title={A contextual-bandit approach to personalized news article recommendation},
  author={Li, Lihong and Chu, Wei and Langford, John and Schapire, Robert E},
  booktitle={Proceedings of the 19th international conference on World wide web},
  pages={661--670},
  year={2010}
}

@article{auer2002finite,
  title={Finite-time analysis of the multiarmed bandit problem},
  author={Auer, Peter and Cesa-Bianchi, Nicolo and Fischer, Paul},
  journal={Machine learning},
  volume={47},
  number={2},
  pages={235--256},
  year={2002},
  publisher={Springer}
}

@inproceedings{schlisselberg2025delay,
  title={Delay as Payoff in MAB},
  author={Schlisselberg, Ofir and Cohen, Ido and Lancewicki, Tal and Mansour, Yishay},
  booktitle={Proceedings of the AAAI Conference on Artificial Intelligence},
  number={19},
  pages={20310--20317},
  year={2025}
}

@article{chen2018contextual,
  title={Contextual combinatorial multi-armed bandits with volatile arms and submodular reward},
  author={Chen, Lixing and Xu, Jie and Lu, Zhuo},
  journal={Advances in Neural Information Processing Systems},
  volume={31},
  year={2018}
}

@inproceedings{xu2020contextual,
  title={Contextual-bandit based personalized recommendation with time-varying user interests},
  author={Xu, Xiao and Dong, Fang and Li, Yanghua and He, Shaojian and Li, Xin},
  booktitle={Proceedings of the AAAI Conference on Artificial Intelligence},
  volume={34},
  pages={6518--6525},
  year={2020}
}

@inproceedings{basu2021contextual,
  title={Contextual blocking bandits},
  author={Basu, Soumya and Papadigenopoulos, Orestis and Caramanis, Constantine and Shakkottai, Sanjay},
  booktitle={International Conference on Artificial Intelligence and Statistics},
  pages={271--279},
  year={2021},
  organization={PMLR}
}

@inproceedings{fleisher2021s,
  title={What's fair about individual fairness?},
  author={Fleisher, Will},
  booktitle={Proceedings of the 2021 AAAI/ACM Conference on AI, Ethics, and Society},
  pages={480--490},
  year={2021}
}

@article{castelnovo2022clarification,
  title={A clarification of the nuances in the fairness metrics landscape},
  author={Castelnovo, Alessandro and Crupi, Riccardo and Greco, Greta and Regoli, Daniele and Penco, Ilaria Giuseppina and Cosentini, Andrea Claudio},
  journal={Scientific reports},
  volume={12},
  number={1},
  pages={4209},
  year={2022},
  publisher={Nature Publishing Group UK London}
}

@inproceedings{kassraie2022meta,
  title={Meta-learning hypothesis spaces for sequential decision-making},
  author={Kassraie, Parnian and Rothfuss, Jonas and Krause, Andreas},
  booktitle={International Conference on Machine Learning},
  pages={10802--10824},
  year={2022},
  organization={PMLR}
}

\end{document}